\documentclass[letterpaper, 10pt, conference]{ieeeconf}

\IEEEoverridecommandlockouts
\usepackage{amsmath,amssymb}

\usepackage{amsthm}
\usepackage{bm}
\usepackage{graphicx}
\usepackage{booktabs}
\usepackage{url}
\usepackage{algorithm}
\usepackage{algpseudocode}

\theoremstyle{plain}
\newtheorem{theorem}{Theorem}
\newtheorem{lemma}[theorem]{Lemma}
\newtheorem{corollary}[theorem]{Corollary}
\newtheorem{proposition}[theorem]{Proposition}

\theoremstyle{definition}
\newtheorem{definition}{Definition}
\newtheorem{assumption}{Assumption}
\newtheorem{remark}{Remark}

\newcommand{\R}{\mathbb{R}}
\newcommand{\bfc}{\bm{c}}
\newcommand{\bff}{\bm{f}}
\newcommand{\bfg}{\bm{g}}
\newcommand{\bfp}{\bm{p}}
\newcommand{\bfn}{\bm{n}}
\newcommand{\bfr}{\bm{r}}
\newcommand{\bfF}{\bm{F}}
\newcommand{\bftau}{\bm{\tau}}
\newcommand{\bfH}{\bm{H}}
\newcommand{\HdotG}{\dot{\bm{H}}_G}
\newcommand{\HdotGz}{\dot{\bm{H}}_{G,0}}
\newcommand{\norm}[1]{\left\|#1\right\|}
\newcommand{\bfJ}{\bm{J}}
\newcommand{\bfU}{\bm{U}}
\newcommand{\bfV}{\bm{V}}
\newcommand{\bfu}{\bm{u}}
\newcommand{\bfD}{\bm{D}}
\newcommand{\bfSigma}{\bm{\Sigma}}
\newcommand{\bfx}{\bm{x}}
\newcommand{\bfz}{\bm{z}}

\title{\LARGE \bf
On the Emergence of Pendular Structure\\
in Multi-Contact Locomotion
}

\author{Lingxue Lyu$^{*}$ \quad Zihui Liu%
\thanks{$^{*}$Lingxue Lyu is with the School of Engineering and Applied
Science, University of Pennsylvania, Philadelphia, PA 19104, USA
{\tt\small lingxuelyu@alumni.upenn.edu}}%
\thanks{Zihui Liu is with the Department of Aeronautics \& Astronautics,
Stanford University, Stanford, CA 94305, USA
{\tt\small zl90@alumni.stanford.edu}}
}

\begin{document}

\maketitle
\thispagestyle{empty}
\pagestyle{empty}
\nocite{IEEEtran:BSTCTL}

\begin{abstract}
LIPM is everywhere in legged-locomotion control, but almost always as a
modeling choice rather than as something the controller's cost actually
prefers.
This note tries to make that link more explicit.
Working from a small centroidal OCP that penalizes the rate of angular
momentum, we look at what its optimum tends to look like.
Three things come out.
With full-rank stance, the optimum drifts toward a pendular force pattern at
a rate determined by the SVD of the moment Jacobian; the constant is set by
foot-span geometry and matches the experiments to within ${\sim}16\%$.
With $N{=}2$ stance, as in trot, the friction cone introduces a lower bound
on $\|\HdotG\|$ that no amount of weight tuning fixes; we also see a
non-smooth feasibility kink at a critical horizontal acceleration that we
can write in closed form.
Adding a task term that asks for a nonzero $\HdotG$ moves the optimum off
the pendular set in a predictable way.
None of this is far from the classical ZMP/DCM picture.
We test these claims on a point-mass quadruped and on the Unitree Go1 in
MuJoCo (open-loop QP and a torque-level closed-loop controller), and we
note where the asymptotic story stops being a good description of what the
closed loop actually does.
\end{abstract}

\section{INTRODUCTION}
\label{sec:intro}

LIPM~\cite{kajita2001} is everywhere.
It shows up as a building block in biped
work~\cite{kajita2003,wieber2006,pratt2006,englsberger2015} and in quadruped
MPC~\cite{dicarlo2018,bledt2018}, in trajectory generation, in
capture-region analyses, and in countless control papers in between.
Mostly it is just \emph{assumed}: the CoM moves along a fixed-height surface
above a virtual pivot, and the rest of the controller is built on top.
There are extensions that drop one assumption or another---variable
height~\cite{hopkins2014}, non-coplanar contacts~\cite{caron2015},
angular-momentum corrections~\cite{englsberger2015,koolen2012}---but the
pendular form is still added by hand.
Chen and Posa~\cite{chen2021} ask the more general question of which
reduced-order models are task-optimal.
We have a narrower question: what kind of objective makes the pendular form
fall out, and where does the picture break.

The other thread we want to flag is the body of work that uses optimal
control directly on multi-contact behaviors---hand-contact fall
recovery~\cite{wang2017realtime,wang2018realization}, contact-transition
trees~\cite{wang2018unified}, balance under heavy
underactuation~\cite{sharma2016bicycle}, and runtime dynamics
calibration~\cite{wang2021calibration}.
The optima in those settings often \emph{look} pendular, even when nothing
in the formulation is asking them to.
We do not have a complete answer for why, but the pieces below at least
explain part of it.

\begin{figure}[!t]
  \centering
  \includegraphics[width=\columnwidth]{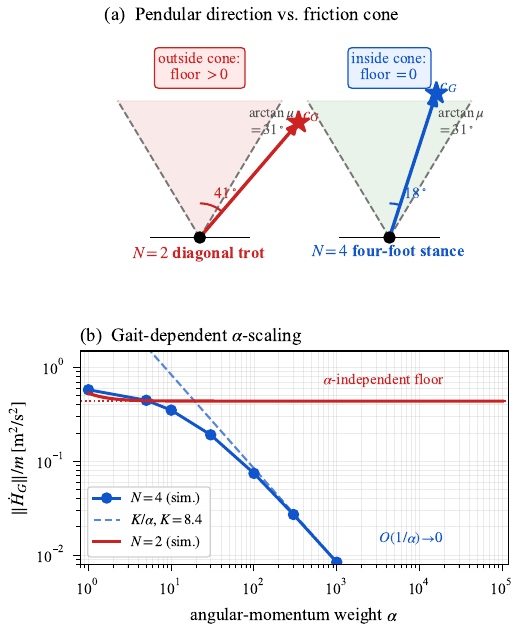}
  \caption{(a) Pendular direction (foot$\to$CoM) versus the friction cone of
  half-angle $\arctan\mu=31^\circ$ at $\mu=0.6$.
  Diagonal trot ($N=2$): the pendular direction tilts $40.6^\circ$ from
  vertical, \emph{outside} the cone, so $\|\HdotG\|$ has an $\alpha$-independent
  floor (Thm.~\ref{thm:floor}).
  Four-foot stance ($N=4$): tilt is $\sim\!18^\circ$, \emph{inside} the cone,
  floor is zero.
  (b) Go1 numerical $\|\HdotG\|/m$ vs.\ $\alpha$: $N=4$ follows $K/\alpha$
  with $K=8.4$ (Thm.~\ref{thm:emerge}); $N=2$ saturates at $0.44~\mathrm{m^2\!/s^2}$.}
  \label{fig:concept}
\end{figure}

The setup is a small centroidal OCP with a quadratic penalty on $\|\HdotG\|$,
the usual contact-force regularizer, a normal-acceleration term, and (if we
want a non-balance task) an explicit $\HdotG$-tracking term.
Three things show up (Fig.~\ref{fig:concept}).
For full-rank stance the optimum drifts toward the pendular force pattern at
$O(1/\alpha)$, with a constant we can read off the moment-Jacobian's SVD;
that constant turns out to just be the foot-span geometry.
For $N=2$ stance the situation is qualitatively different.
The moment operator drops to rank $1$, so a $\hat\bfD$-aligned residual
remains for any force distribution, and on top of that the friction cone
imposes its own lower bound that becomes binding at a finite horizontal
acceleration.
Adding the task term, finally, just slides the optimum off the manifold by a
factor $\lambda/(\alpha+\lambda)$.
We are not claiming any of this is far from
ZMP~\cite{vukobratovic1972}/DCM~\cite{englsberger2015,koolen2012} thinking;
the goal here is to write the connection down explicitly so that the
constants and limits become checkable.

\section{PROBLEM FORMULATION}
\label{sec:formulation}

Let $\bfc\in\R^3$ be the CoM, $m>0$ the total mass, $\bfg=[0,0,-g]^\top$.
With $N_c\geq 1$ contact points $\bfr_i$ exerting forces $\bff_i$, the
centroidal dynamics~\cite{orin2013} read
\begin{align}
  m\ddot{\bfc} &= \textstyle\sum_{i}\bff_i + m\bfg, \label{eq:newton}\\
  \HdotG       &= \textstyle\sum_{i}(\bfr_i-\bfc)\times\bff_i, \label{eq:euler}
\end{align}
so the full-body dynamics affect the CoM only through the net wrench.
Each contact obeys friction
$\bff_i\in\mathcal{K}_i=\{\bff:f_{n,i}\geq 0,\,\|\bff_{t,i}\|\leq\mu_i f_{n,i}\}$,
and the ZMP is $\bfp_\mathrm{ZMP}=\bfc_{xy}-(h/g)\ddot{\bfc}_{xy}$
\cite{vukobratovic1972}.

\begin{definition}[Net-Wrench OCP]
\label{def:ocp}
For $\alpha,\beta,\gamma\geq 0$, $\lambda\geq 0$, target
$\HdotG^\mathrm{task}(t)$, and support normal $\bfn\in\mathbb{S}^2$:
\begin{align}
  \min_{\bfc(\cdot),\,\{\bff_i(\cdot)\}}\;
  \int_0^T\!\!\Bigl[
    & \alpha\norm{\HdotG}^2
      + \lambda\norm{\HdotG-\HdotG^\mathrm{task}}^2 \nonumber\\
    & + \beta(\bfn^\top\ddot{\bfc})^2
      + \gamma\textstyle\sum_i\norm{\bff_i}^2
  \Bigr]dt \label{eq:ocp}
\end{align}
subject to~\eqref{eq:newton}, $\bff_i\in\mathcal{K}_i$, and boundary conditions.
The $\lambda=0$ case is the \emph{balance-dominated} OCP; $\lambda>0$ is
\emph{task-dominated}.
\end{definition}

\begin{definition}[Pendular Manifold]
\label{def:pendular}
$\mathcal{P}(\bfn,h)=\{(\bfc,\bfF):\HdotG=\bm{0},\;\bfn^\top(\bfc-\bfp)=h,\;
\bfF=(mg/h)(\bfc-\bfp),\;\bfp\in\Pi\}$.
\end{definition}

\begin{assumption}[Feasibility/Regularity]
\label{ass:both}
The OCP is feasible with bounded cost, friction cones are non-empty and
convex, and $h^*=\bfn^\top(\bfc^*-\bfp^*)\geq h_{\min}>0$.
\end{assumption}

\section{ASYMPTOTIC STRUCTURE WITH FULL-RANK STANCE}
\label{sec:emerge}

Set $\lambda=0$ for now.
A simple cross-product calculation shows that the pendular force
$\bfF_\mathrm{pend}=(mg/h)(\bfc-\bfp)$ achieves $\HdotG=\bm 0$ pointwise.
Theorem~\ref{thm:emerge} below combines this pointwise observation with the
asymptotic implications of the cost.

\begin{theorem}
\label{thm:emerge}
Let Assumption~\ref{ass:both} hold and let
$(\bfc^*_{\alpha\beta},\{\bff^*_{i,\alpha\beta}\})$ minimize~\eqref{eq:ocp}
with $\lambda=0$.

\noindent
\textit{(i) Pointwise certificate.}
Among all net forces $\bfF$ achieving a prescribed horizontal acceleration,
the pendular force $\bfF_\mathrm{pend}$ uniquely minimizes $\|\HdotG\|^2$.

\noindent
\textit{(ii) Asymptotic collapse.}
\begin{align}
  \int_0^T\!\norm{\HdotG^*}^2dt \leq \frac{C_0}{\alpha},\quad
  \int_0^T\!(\bfn^\top\ddot{\bfc}^*)^2dt \leq \frac{C_0}{\beta}, \\
  \int_0^T\!\norm{\bfF^*-\tfrac{mg}{h^*}(\bfc^*-\bfp^*)}^2 dt
  \leq \tfrac{C_1}{\alpha}+\tfrac{C_2}{\beta}. \label{eq:rate}
\end{align}

\noindent
\textit{(iii) Tightness.}
Any trajectory with pendular deviation $\varepsilon_P$ incurs
$\int\|\HdotG\|^2 dt\geq (h_{\min}^2/K^2)T\varepsilon_P$,
matching~\eqref{eq:rate} up to constants.
\end{theorem}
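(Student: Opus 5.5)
Part (i) is a pointwise linear-algebra statement, and I would handle it first. Fix a time $t$ and write the net force $\bfF=\sum_i\bff_i$. Using \eqref{eq:euler}, I split each contact moment as $(\bfr_i-\bfc)\times\bff_i=(\bfr_i-\bfp)\times\bff_i-(\bfc-\bfp)\times\bff_i$, where $\bfp\in\Pi$ is the equivalent pivot (the center of pressure of the distribution). Its defining property is that $\sum_i(\bfr_i-\bfp)\times\bff_i$ has no component outside $\bfn$. Then $\HdotG=-(\bfc-\bfp)\times\bfF+\bm{\tau}_n$, with $\bm\tau_n\parallel\bfn$.

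A horizontal acceleration $\ddot\bfc_{xy}$ prescribes $\bfF_{xy}$ through \eqref{eq:newton}. With $h=\bfn^\top(\bfc-\bfp)$ fixed, the remaining freedom is the normal component $F_n$. The map $F_n\mapsto(\bfc-\bfp)\times\bfF$ is affine, so $\|\HdotG\|^2$ is a convex quadratic in $F_n$ (and in $\bfp$). Its minimum is zero, attained exactly when $\bfF\parallel(\bfc-\bfp)$; with the Newton balance $F_n=mg$ this forces $\bfF=(mg/h)(\bfc-\bfp)=\bfF_\mathrm{pend}$. Uniqueness comes from strict convexity: $\bfc-\bfp\neq\bm 0$ by Assumption~\ref{ass:both}, since $h\ge h_{\min}>0$, so the kernel of $\bfF\mapsto(\bfc-\bfp)\times\bfF$ is one-dimensional and the constraint fixes the scale.

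For part (ii) I would use a comparison argument. Build a feasible competitor $(\bar\bfc,\bar\bfF)$ lying on $\mathcal P(\bfn,h)$ that satisfies the boundary conditions, for instance a LIPM trajectory with fixed height and a smooth pivot schedule; full-rank stance together with Assumption~\ref{ass:both} makes it friction-feasible. This competitor has $\HdotG=\bm 0$ and $\bfn^\top\ddot{\bar\bfc}=0$, so its cost is only $\gamma\int\|\bar\bff\|^2=:C_0$, which is independent of $\alpha$ and $\beta$. Optimality then gives $\alpha\int\|\HdotG^*\|^2\le C_0$ and $\beta\int(\bfn^\top\ddot\bfc^*)^2\le C_0$.

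To get \eqref{eq:rate}, I would write $\bfF^*-\bfF_\mathrm{pend}$ in two pieces. The first is a component transverse to $\bfc^*-\bfp^*$; by the identity above it is controlled by $\|\HdotG^*\|/h^*$, and this is where the SVD of the moment Jacobian enters. Its smallest nonzero singular value restricted to the transverse directions is at least $h_{\min}/K$, with $K$ a foot-span constant. The second is an axial component, fixed by the normal force mismatch $m\,\bfn^\top\ddot\bfc^*$. Squaring and integrating yields $C_1=K^2C_0/h_{\min}^2$ and $C_2=m^2C_0\cdot O(1)$.

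Part (iii) is the reverse of that same singular-value inequality: $\|\HdotG\|\ge\sigma_{\min}\|(\bfF-\bfF_\mathrm{pend})_\perp\|\ge(h_{\min}/K)\|\cdot\|$, and integrating over $[0,T]$ produces the stated bound once $\varepsilon_P$ is read as an averaged squared deviation. I expect the main obstacle to be the uniform singular-value bound $\sigma_{\min}\ge h_{\min}/K$ along the optimal trajectory. It needs full-rank stance to hold uniformly in time, and the pivot $\bfp^*$ must stay inside the support polygon so that $K$ depends only on geometry. I would establish this by compactness of the admissible contact configurations. The second delicate point is the existence of the feasible pendular competitor under friction, which I would take directly from Assumption~\ref{ass:both}.
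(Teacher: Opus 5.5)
Your (i) and the skeleton of (ii) follow the paper's argument. The genuine gap is in (iii).

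\textbf{The gap in (iii).} The bound $\|\HdotG\|\ge\sigma_{\min}\|(\bfF-\bfF_\mathrm{pend})_\perp\|$ controls only the part of the deviation transverse to $\bfc-\bfp$. But $\bfF-\bfF_\mathrm{pend}=(s-mg/h)(\bfc-\bfp)+\bfF_\perp$ also has an axial part to which $\HdotG$ is blind, and reading $\varepsilon_P$ as a mean squared deviation does not remove it. Concretely, keep the CoM above a fixed pivot, $\bfc-\bfp=h(t)\bfn$ with $h\ge h_{\min}$ and $g+\ddot h>0$. Use vertical foot forces with CoP at $\bfp$ and sum $\bfF=m(g+\ddot h)\bfn$. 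This is friction-feasible and gives $\HdotG\equiv\bm 0$, yet $\bfF-\bfF_\mathrm{pend}=m\ddot h\,\bfn\neq\bm 0$ whenever $\ddot h\neq 0$. So (iii) is false for arbitrary trajectories. The paper's proof has the same hole: it expands the deviation into $\|\bfF_\perp\|^2+(\lambda-mg/h)^2\|\bfc-\bfp\|^2$ and bounds only the first term by $\HdotG$.

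\textbf{How to repair (iii).} The missing idea is to tie the axial part to the transverse one through the normal projection of Newton's law. If $\bfn^\top\ddot\bfc=0$, which is what $\mathcal{P}(\bfn,h)$ encodes, then $(s-mg/h)\,h=-\bfn^\top\bfF_\perp$. Since $\bfF_\perp\perp\bfc-\bfp$, we have $|\bfn^\top\bfF_\perp|\le\|\bfF_\perp\|\sqrt{1-h^2/\|\bfc-\bfp\|^2}$. Hence $\|\bfF-\bfF_\mathrm{pend}\|\le(\|\bfc-\bfp\|/h)\|\bfF_\perp\|$. Combined with $\|\HdotG\|\ge h\|\bfF_\perp\|$ (justified below), this gives exactly $\|\HdotG\|^2\ge(h_{\min}^2/K^2)\|\bfF-\bfF_\mathrm{pend}\|^2$, with $K$ a bound on the dimensionless tilt $\|\bfc-\bfp\|/h$. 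For general trajectories you must instead add a multiple of $m^2\int(\bfn^\top\ddot\bfc)^2dt$ to the left-hand side.

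\textbf{The transverse estimate in (ii).} The moment-Jacobian SVD, uniform full rank and compactness are the wrong tools here, and the obstacle you anticipate does not arise. The $\bfJ_H$ of Proposition~\ref{prop:scaling} acts on force redistributions at fixed net force; its singular values are foot spans and have nothing to do with $h_{\min}$. What you need follows from your own identity $\HdotG=-(\bfc-\bfp)\times\bfF+\bm{\tau}_n$. The vector $(\bfc-\bfp)\times\bfF_\perp$ is orthogonal to $\bfc-\bfp$, so its $\bfn$-component is at most a fraction $\sqrt{1-h^2/\|\bfc-\bfp\|^2}$ of its norm, and that is all the yaw term can cancel. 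Hence $\|\HdotG\|\ge h\|\bfF_\perp\|$, and $h^*\ge h_{\min}$ alone gives the transverse estimate with $C_1\propto C_0/h_{\min}^2$.

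\textbf{Hypotheses missing from Assumption~\ref{ass:both}.} Two things are genuinely needed and not supplied.
(a) A tilt bound $\|\bfc^*-\bfp^*\|/h^*\le K$, which is hidden in your $O(1)$. It is needed for the axial piece, whose coefficient is $(m\bfn^\top\ddot\bfc^*-\bfn^\top\bfF^*_\perp)/h^*$, so it feeds $C_1$ as well as $C_2$.
(b) The existence of your pendular competitor. Boundary data prescribing a height change between rest states keep $\int(\bfn^\top\ddot\bfc)^2dt$ bounded away from zero for every feasible trajectory, so then no $\beta$-independent $C_0$ exists. State the competitor's existence as a hypothesis rather than deriving it.

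With these in place, your orthogonal split is a cleaner version of the paper's angle argument. It avoids the paper's pointwise claim $\|\bfF^*\|\ge mg$, which fails for instance under purely downward CoM acceleration. Your explicit competitor also supplies the uniform $C_0$ that the paper only asserts.
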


\begin{proof}
\textit{(i) Pointwise certificate.}
For prescribed horizontal acceleration $\ddot\bfc_{xy}$, Newton's
law~\eqref{eq:newton} fixes $\bfF_{xy}$.
Decompose $\bfF=\bfF_\parallel+\bfF_\perp$ with
$\bfF_\parallel\parallel(\bfc-\bfp)$.
Substituting in~\eqref{eq:euler} gives
$\HdotG=(\bfc-\bfp)\times\bfF_\perp$, hence
$\|\HdotG\|^2=\|\bfc-\bfp\|^2\|\bfF_\perp\|^2$, which equals zero iff
$\bfF_\perp=\bm{0}$, i.e.\ $\bfF$ is co-linear with $\bfc-\bfp$.
The pendular force $\bfF_\mathrm{pend}=(mg/h)(\bfc-\bfp)$ is the unique
co-linear choice that also satisfies Newton's law for the prescribed
acceleration.

\textit{(ii) Asymptotic collapse.}
The cost upper bound $J^*_{\alpha\beta}\leq C_0$ gives
$\alpha\int\|\HdotG^*\|^2dt\leq C_0$ and
$\beta\int(\bfn^\top\ddot\bfc^*)^2dt\leq C_0$ directly.
Using $\HdotG^*=(\bfc^*-\bfp^*)\times\bfF^*=
\|\bfc^*-\bfp^*\|\|\bfF^*\|\sin\theta^*$ where $\theta^*$ is the angle between
$\bfF^*$ and $\bfc^*-\bfp^*$:
$\int\|\bfc^*-\bfp^*\|^2\|\bfF^*\|^2\sin^2\theta^*dt\leq C_0/\alpha$.
By Assumption~\ref{ass:both} $\|\bfc^*-\bfp^*\|\geq h_{\min}$, and gravity
support gives $\|\bfF^*\|\geq mg$, so
$\int\sin^2\theta^*dt\leq C_0/(\alpha h_{\min}^2 m^2 g^2)$, i.e.~$\bfF^*$
becomes co-linear with $\bfc^*-\bfp^*$ at rate $O(1/\sqrt\alpha)$ in $L^2$.
Projecting~\eqref{eq:newton} onto $\bfn$:
$m\bfn^\top\ddot\bfc^*=\lambda^*h^*-mg$, so
$\lambda^*=mg/h^*+O(1/\sqrt\beta)$.
Combining the two estimates and squaring yields~\eqref{eq:rate}.

\textit{(iii) Tightness.}
For an arbitrary feasible trajectory write
$\bfF=\lambda(\bfc-\bfp)+\bfF_\perp$ with $\bfF_\perp\perp(\bfc-\bfp)$.
Then $\HdotG=(\bfc-\bfp)\times\bfF_\perp$ and
$\|\HdotG\|^2\geq h_{\min}^2\|\bfF_\perp\|^2$.
The pendular deviation expands as
$\|\bfF-(mg/h)(\bfc-\bfp)\|^2=\|\bfF_\perp\|^2+(\lambda-mg/h)^2\|\bfc-\bfp\|^2
\leq K^2(\|\bfF_\perp\|^2+(\lambda-mg/h)^2)$ for a geometric constant $K$
bounding $\|\bfc-\bfp\|$.
Integrating gives the claimed lower bound.
\end{proof}

The rate is sharp; the next proposition makes the constant explicit.

\begin{proposition}
\label{prop:scaling}
Stack the equilibrium-null-space forces into the moment Jacobian
$\bfJ_H\in\R^{3\times 3(N-1)}$ with SVD $\bfJ_H=\bfU\bfSigma\bfV^\top$ and
$\sigma_1\!\geq\!\sigma_2\!\geq\!\sigma_3\!>\!0$.
Let $h_k(t)=(\bfU^\top\HdotGz(t))_k$ be the excitation projection from
gravity and inertia.
For $\alpha\sigma_k^2\!\gg\!\gamma$,
\begin{equation}
  \tfrac{\|\HdotG^*\|}{m}
  \;\approx\; \tfrac{\gamma}{\alpha}\,K,\quad
  K=\sqrt{\textstyle\sum_{k=1}^{3}\langle h_k^2\rangle_t/\sigma_k^4}.
  \label{eq:tight_const}
\end{equation}
For a rectangular four-foot polygon with half-spans $L_x,L_y$, the singular
values are exactly the geometric foot spans:
$\sigma_1=2\sqrt{L_x^2+L_y^2}$, $\sigma_2=2L_x$, $\sigma_3=2L_y$.
\end{proposition}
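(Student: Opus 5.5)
The plan is to reduce the $\lambda=0$ OCP to a pointwise regularized least-squares problem over the force distribution at fixed CoM trajectory. The closed-form residual in the SVD basis of $\bfJ_H$ then gives both the constant $K$ and the geometric singular values.

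\textbf{Step 1: parametrize the force distribution.} Fix $t$ and $\bfc^*(t)$, and stack $\bfF=(\bff_1,\dots,\bff_N)\in\R^{3N}$. Newton's law~\eqref{eq:newton} fixes the sum map $\bm{S}\bfF=\sum_i\bff_i$, so the feasible set is an affine subspace. I write it as
\begin{equation*}
\bfF=\bfF_0+\bm{B}\bfz,
\end{equation*}
where:
\begin{itemize}
\item $\bfF_0$ is the \emph{minimum-norm} particular solution, i.e.\ equal sharing $\bff_{0,i}=(m\ddot\bfc-m\bfg)/N$, which lies in $\mathrm{range}(\bm{S}^\top)$;
\item $\bm{B}\in\R^{3N\times 3(N-1)}$ has orthonormal columns spanning $\ker\bm{S}$.
\end{itemize}
The moment map~\eqref{eq:euler} is linear, $\HdotG=\bm{M}\bfF$. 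This gives $\HdotG=\HdotGz+\bfJ_H\bfz$ with $\HdotGz=\bm{M}\bfF_0$ (the gravity/inertia excitation) and $\bfJ_H=\bm{M}\bm{B}$.

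Because $\bfF_0\perp\mathrm{range}(\bm{B})$, the regularizer splits as $\gamma\|\bfF_0\|^2+\gamma\|\bfz\|^2$ with no cross term. This orthonormal-basis choice is the point I expect to need the most care: without it, $\gamma$ would also couple through $\bfF_0$ and the clean formula would fail. The $\beta$ term depends only on $\bfc$, so it does not enter the inner problem. I will treat friction cones as inactive (interior optimum) in the regime considered; the $N=2$ cone-active case is deferred to Thm.~\ref{thm:floor}.

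\textbf{Step 2: closed-form residual.} The inner problem is
\begin{equation*}
\min_{\bfz}\;\alpha\|\HdotGz+\bfJ_H\bfz\|^2+\gamma\|\bfz\|^2,
\end{equation*}
a ridge regression. It gives
\begin{equation*}
\HdotG^*=\gamma(\alpha\bfJ_H\bfJ_H^\top+\gamma I)^{-1}\HdotGz .
\end{equation*}
In the basis $\bfU$ this reads $(\bfU^\top\HdotG^*)_k=\gamma h_k/(\gamma+\alpha\sigma_k^2)$. Here $\sigma_3>0$ guarantees that $\bfJ_H\bfJ_H^\top$ is invertible.

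For $\alpha\sigma_k^2\gg\gamma$ each component is $\approx(\gamma/\alpha)h_k/\sigma_k^2$. Taking the norm, dividing by $m$ (with $h_k$ understood per unit mass), and replacing the pointwise square by its time average (RMS over $[0,T]$) yields~\eqref{eq:tight_const}. The neglected terms are $O(\gamma^2/\alpha^2\sigma_k^4)$ relative corrections. This is consistent with the $O(1/\alpha)$ rate of Thm.~\ref{thm:emerge}(ii), now with an explicit constant.

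\textbf{Step 3: geometry of $\bfJ_H$.} I compute $\bfJ_H\bfJ_H^\top=\bm{M}\bm{B}\bm{B}^\top\bm{M}^\top=\bm{M}\bm{P}\bm{M}^\top$, with projector
\begin{equation*}
\bm{P}=I-\tfrac1N\bm{1}\bm{1}^\top\otimes I_3 .
\end{equation*}
The projection subtracts the mean lever arm, so the common CoM-to-centroid offset (including the height $h$) drops out. Only the foot offsets $\bm{\rho}_i=\bfr_i-\bar\bfr$ from the support centroid remain, and
\begin{equation*}
\bfJ_H\bfJ_H^\top=\textstyle\sum_i\bigl(\|\bm{\rho}_i\|^2I-\bm{\rho}_i\bm{\rho}_i^\top\bigr).
\end{equation*}
For the rectangle $\bm{\rho}_i=(\pm L_x,\pm L_y,0)$, the off-diagonal sums vanish by symmetry. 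The result is $\mathrm{diag}(4L_y^2,\,4L_x^2,\,4(L_x^2+L_y^2))$. Its square roots are $2\sqrt{L_x^2+L_y^2}$, $2L_x$, $2L_y$ (yaw, pitch, roll directions), ordered as stated when $L_x\ge L_y$.

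The main obstacle is making Step 1's reduction honest: showing that the optimal $\bfc^*$ does not couple back through the $\gamma$ and $\alpha$ terms in a way that changes $\HdotGz$ at leading order. I would handle this by evaluating $h_k$ along the optimizer itself, which is what the statement asserts anyway. That makes the formula a pointwise identity at the optimum, with the only approximation being the expansion in $\gamma/(\alpha\sigma_k^2)$.
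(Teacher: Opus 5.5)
Your proposal is correct and follows the paper's proof: the KKT/ridge closed form $\HdotG^*=\gamma(\alpha\bfJ_H\bfJ_H^\top+\gamma I)^{-1}\HdotGz$ read off mode by mode in the $\bfU$ basis, then $L^2$ time-averaging, then direct computation of $\bfJ_H\bfJ_H^\top$ for the rectangle. You also make explicit two things the paper leaves implicit: the equal-sharing particular solution with an orthonormal null-space basis (without which the $\gamma$ term acquires a cross term or a non-identity weight, and the singular values are no longer the foot spans), and the centroid-offset form $\bfJ_H\bfJ_H^\top=\sum_i(\|\bm{\rho}_i\|^2I-\bm{\rho}_i\bm{\rho}_i^\top)$. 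One small slip is that the neglected terms are $O(\gamma/(\alpha\sigma_k^2))$ corrections \emph{relative} to the leading term, second order only in absolute size, though this does not affect the stated approximation.
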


\begin{proof}
The KKT stationarity for the unconstrained QP gives
$\HdotG^*=\sum_k\frac{\gamma}{\gamma+\alpha\sigma_k^2}h_k\bfu_k$,
which scales as $\gamma/(\alpha\sigma_k^2)$ in each mode for large $\alpha$.
Time-averaging in $L^2$ produces~\eqref{eq:tight_const}.
The singular-value identity follows from direct computation of
$\bfJ_H\bfJ_H^\top$ for the symmetric four-foot rectangle.
\end{proof}

\begin{remark}[Weight selection]
Theorem~\ref{thm:emerge} immediately yields the design rule
$\alpha^*=K/(\varepsilon^*)^2$ for a target residual $\varepsilon^*$:
one calibration sweep estimates $K$, and any larger $\alpha$ achieves the
target.
\end{remark}

\subsection{ZMP and DCM Bridges}
\label{sec:bridges}

On the pendular manifold, two classical objects emerge for free.

\begin{lemma}[ZMP equals virtual pivot]
\label{lem:zmp}
On $\mathcal{P}(\bfn,h)$, $\bfp_\mathrm{ZMP}=\bfp_{xy}$.
\end{lemma}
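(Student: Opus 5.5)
We would prove Lemma~\ref{lem:zmp} by direct substitution: on $\mathcal{P}(\bfn,h)$ the net contact force is prescribed, so Newton's law~\eqref{eq:newton} fixes $\ddot\bfc$. Inserting this $\ddot\bfc$ into the ZMP formula should collapse it to the pivot. We take the standard flat-ground convention $\bfn=[0,0,1]^\top$, with $\Pi$ the support plane. The height $h$ appearing in the ZMP definition is then the same $h=\bfn^\top(\bfc-\bfp)$ used in Definition~\ref{def:pendular}.

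\textbf{Step 1.} On $\mathcal{P}$, set $\bfF=\sum_i\bff_i=(mg/h)(\bfc-\bfp)$. Newton's law~\eqref{eq:newton} then gives
\begin{equation*}
m\ddot\bfc=\tfrac{mg}{h}(\bfc-\bfp)+m\bfg .
\end{equation*}
Since $\bfg$ has no horizontal component, projecting onto the $xy$-plane yields $\ddot\bfc_{xy}=(g/h)(\bfc_{xy}-\bfp_{xy})$. This is exactly the LIPM equation.

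\textbf{Step 2.} Substitute this into the ZMP formula:
\begin{equation*}
\bfp_\mathrm{ZMP}=\bfc_{xy}-\tfrac{h}{g}\ddot\bfc_{xy}=\bfc_{xy}-(\bfc_{xy}-\bfp_{xy})=\bfp_{xy}.
\end{equation*}
This proves the claim.

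\textbf{Step 3 (consistency check).} The vertical component of the same Newton equation gives $m\ddot c_z=(mg/h)\,h-mg=0$. The CoM height is therefore constant, which justifies using the fixed $h$ in the ZMP formula; the ZMP formula is exact under zero vertical acceleration and $\HdotG=\bm 0$, both of which hold on $\mathcal{P}$.

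The main obstacle is conventions, not algebra. The ZMP formula as written in Section~\ref{sec:formulation} presumes a horizontal support plane, whereas the manifold allows a general normal $\bfn$. For general $\bfn$ we would instead project onto the plane orthogonal to $\bfn$. We would also check that $\bfg$ is parallel to $\bfn$, or else state the lemma for $\bfn=\bm e_z$ only, which is the case the lemma implicitly addresses.
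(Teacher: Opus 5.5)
Your proof is correct and is the paper's argument: insert $\bfF=(mg/h)(\bfc-\bfp)$ into Newton's law to get $\ddot\bfc_{xy}=(g/h)(\bfc_{xy}-\bfp_{xy})$, then substitute into the ZMP definition so the $h/g$ factor cancels. Steps 1--2 never use $\bfn=\bm e_z$; they need only that $\bfg$ has no horizontal component and that the $h$ in the ZMP formula is the manifold's $h$. So the flat-ground specialization matters only for your Step 3 consistency check, not for the identity itself.
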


\begin{proof}
$\bfF=(mg/h)(\bfc-\bfp)$ gives $\ddot{\bfc}_{xy}=(g/h)(\bfc_{xy}-\bfp_{xy})$;
substituting into the ZMP definition cancels the $h/g$ factor.
\end{proof}

\begin{corollary}[DCM orbital equation]
\label{cor:dcm}
For $\bm{\xi}=\bfc_{xy}+\dot{\bfc}_{xy}/\omega$ and $\omega=\sqrt{g/h}$:
$\dot{\bm{\xi}}=\omega(\bm{\xi}-\bfp_{xy})$ on $\mathcal{P}(\bfn,h)$.
\end{corollary}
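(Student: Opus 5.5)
Corollary~\ref{cor:dcm} follows by differentiating $\bm{\xi}$ and substituting the horizontal dynamics valid on $\mathcal{P}(\bfn,h)$; the only extra fact needed is that $\bfp_{xy}$ is stationary, which I will assume for a fixed stance. First I extract the horizontal CoM equation. As in the proof of Lemma~\ref{lem:zmp}, on $\mathcal{P}(\bfn,h)$ the net force is $\bfF=(mg/h)(\bfc-\bfp)$. Strictly, Newton's law~\eqref{eq:newton} has $\bfF$ equal to the net contact force and adds $m\bfg$; but $\bfg$ has no horizontal component, so in either reading the horizontal projection is
\begin{equation*}
  \ddot{\bfc}_{xy}=\tfrac{g}{h}(\bfc_{xy}-\bfp_{xy})=\omega^2(\bfc_{xy}-\bfp_{xy}),\qquad \omega=\sqrt{g/h}.
\end{equation*}
Here I take $\bfn$ vertical, or read ``$xy$'' as the tangent plane of $\Pi$, so that this projection is well defined. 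I also take $h$ constant along the trajectory, as the manifold's definition prescribes with fixed $h$, so that $\omega$ is constant and $\dot\omega=0$.

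Next I differentiate $\bm{\xi}=\bfc_{xy}+\dot{\bfc}_{xy}/\omega$ with $\omega$ constant, which gives $\dot{\bm{\xi}}=\dot{\bfc}_{xy}+\ddot{\bfc}_{xy}/\omega$. Substituting the pendular acceleration gives
\begin{equation*}
  \dot{\bm{\xi}}=\dot{\bfc}_{xy}+\omega(\bfc_{xy}-\bfp_{xy})
  =\omega\Bigl(\bfc_{xy}+\tfrac{\dot{\bfc}_{xy}}{\omega}-\bfp_{xy}\Bigr)=\omega(\bm{\xi}-\bfp_{xy}).
\end{equation*}
This is the claimed orbital equation.

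This substitution is routine; the real obstacle is the hypotheses it silently uses. It needs $h$ constant, because otherwise an extra term $-\dot\omega\,\dot{\bfc}_{xy}/\omega^2$ appears. It needs a horizontal support normal, so that $\bfF_{xy}$ alone determines $\ddot{\bfc}_{xy}$. Finally, the equation is instantaneous in $\bfp_{xy}(t)$: if the pivot moves, the statement still holds pointwise, since no derivative of $\bfp$ enters. I would state these conventions explicitly before the computation, and then the corollary follows in two lines.
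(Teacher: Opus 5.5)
Your argument is correct and follows the route the paper intends. The paper gives no separate proof of the corollary: it relies on $\ddot{\bfc}_{xy}=\omega^2(\bfc_{xy}-\bfp_{xy})$ from the proof of Lemma~\ref{lem:zmp}, and then differentiating $\bm{\xi}$ with $h$ (hence $\omega$) fixed gives the result exactly as you wrote. Two small tidy-ups: drop the opening assumption that $\bfp_{xy}$ is stationary, since your own closing remark correctly shows no derivative of $\bfp$ enters; and drop the ``tangent plane of $\Pi$'' alternative, because for a tilted $\Pi$ the contact-force reading of $\bfF$ picks up an in-plane component of $\bfg$ and the LIPM relation fails, so keep $\bfn$ vertical (a horizontal support plane rather than a ``horizontal support normal'').
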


ZMP-based controllers are thus solving an instance of~\eqref{eq:ocp} at high
$\alpha$, and DCM capturability~\cite{englsberger2015,koolen2012}
inherits its orbital structure from the same source.

\section{LOWER BOUND UNDER \boldmath$N=2$ STANCE}
\label{sec:floor}

When the moment Jacobian has full row rank, Theorem~\ref{thm:emerge} drives
$\|\HdotG\|\to 0$ as $\alpha\to\infty$.
$N=2$ stance is qualitatively different: the moment operator is rank-$1$, and
a residual along $\hat\bfD$ remains for any choice of contact-force
distribution.

\begin{theorem}
\label{thm:floor}
For $N=2$ feet at $\bfp_1,\bfp_2$ with $\bfD=\bfp_1-\bfp_2$,
midpoint $\bar\bfp$, and $\bfr=\bar\bfp-\bfc$:
\begin{equation}
  \tfrac{\|\HdotG\|}{m}
  \;\geq\; \tfrac{|(\bfr\times\bfF_\mathrm{net})\cdot\hat\bfD|}{m}
  \quad\text{for all } \alpha\geq 0,
  \label{eq:floor}
\end{equation}
and equality is attainable iff the friction cones admit the canceller
$\bm\delta^*$ defined in Theorem~\ref{thm:bifurcation}.
\end{theorem}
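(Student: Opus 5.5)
The plan is to work pointwise in time and for an arbitrary feasible force pair, so that $\alpha$ never enters. The bound is then automatically valid ``for all $\alpha\geq 0$'', and in particular at the optimum of~\eqref{eq:ocp}. The key step is an even/odd decomposition of the two contact forces about their mean. Fix the net force $\bfF_\mathrm{net}=\bff_1+\bff_2$, which Newton's law~\eqref{eq:newton} determines from $\ddot\bfc$. Write
\[
\bff_1=\tfrac12\bfF_\mathrm{net}+\bm\delta,\qquad \bff_2=\tfrac12\bfF_\mathrm{net}-\bm\delta,
\]
where $\bm\delta\in\R^3$ is the internal (equilibrium-null-space) force. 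This is the $N=2$ instance of the null space used in Proposition~\ref{prop:scaling}.

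Substitute into~\eqref{eq:euler}, using $\bfp_1-\bfc=\bfr+\tfrac12\bfD$ and $\bfp_2-\bfc=\bfr-\tfrac12\bfD$. A short expansion gives
\[
\HdotG=\bfr\times\bfF_\mathrm{net}+\bfD\times\bm\delta .
\]
So the moment operator acting on the null-space forces is $\bm\delta\mapsto\bfD\times\bm\delta$. Its range is exactly the plane $\hat\bfD^\perp$, which is the rank deficiency in the $\hat\bfD$ direction mentioned before the theorem. Dotting with $\hat\bfD$ kills the second term, so
\[
\HdotG\cdot\hat\bfD=(\bfr\times\bfF_\mathrm{net})\cdot\hat\bfD
\]
for every force distribution. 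Then $\|\HdotG\|\geq|\HdotG\cdot\hat\bfD|$, and dividing by $m$ gives~\eqref{eq:floor}. Nothing here depends on $\alpha$, $\beta$ or $\gamma$, which gives the claimed uniformity.

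For the equality case, note that $\|\HdotG\|=|\HdotG\cdot\hat\bfD|$ holds iff the component of $\HdotG$ in $\hat\bfD^\perp$ vanishes. Set $\bm v=\bfr\times\bfF_\mathrm{net}$; the condition is then $\bfD\times\bm\delta=-(\bm v-(\bm v\cdot\hat\bfD)\hat\bfD)$. I will solve this explicitly using $\bfD\times(\bfD\times\bm v)=\bfD(\bfD\cdot\bm v)-\|\bfD\|^2\bm v$. A particular solution is
\[
\bm\delta^*=\frac{\bfD\times(\bfr\times\bfF_\mathrm{net})}{\|\bfD\|^2},
\]
and the full solution set is the line $\bm\delta^*+s\bfD$, $s\in\R$, since $\ker(\bfD\times\cdot)=\mathrm{span}(\bfD)$. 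Equality is therefore attainable iff some $s$ makes both $\tfrac12\bfF_\mathrm{net}\pm(\bm\delta^*+s\bfD)$ lie in $\mathcal{K}_1$ and $\mathcal{K}_2$ respectively. This is the feasibility condition on the canceller.

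The algebra is routine. The delicate step is reconciling this ``iff'' with whatever precise definition of the canceller Theorem~\ref{thm:bifurcation} adopts. If that theorem defines $\bm\delta^*$ as the specific vector above, the condition must be read modulo the free $s\bfD$ direction. That direction shifts load along the foot axis, and it matters for friction because it can redistribute normal force. I would state the condition for the whole affine family and note that it reduces to the single canceller when $\bfD$ is horizontal and the cones are symmetric. I would also remark that convexity of the cones (Assumption~\ref{ass:both}) makes the admissible set of $s$ an interval. That interval is what later produces the closed-form critical acceleration at which it becomes empty.
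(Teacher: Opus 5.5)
Your proof of the inequality is the paper's proof. You use the same split $\bff_{1,2}=\tfrac12\bfF_\mathrm{net}\pm\bm\delta$, the same identity $\HdotG=\bfr\times\bfF_\mathrm{net}+\bfD\times\bm\delta$, and the same observation that $\bfD\times\cdot$ has range $\hat\bfD^\perp$. The paper's proof stops at the inequality. The equality clause is argued only inside Theorem~\ref{thm:bifurcation}, where necessity rests on the claim that an infeasible $\bm\delta^*$ pins $\bm\delta(\alpha)$ to the cone boundary. That argument overlooks exactly the kernel direction $\mathrm{span}(\bfD)$ you identify. Your affine-family criterion is the correct characterization: equality is attainable iff some $s$ gives $\tfrac12\bfF_\mathrm{net}\pm(\bm\delta^*+s\bfD)\in\mathcal{K}_{1,2}$. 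Your $\bm\delta^*$ also agrees with \eqref{eq:delta_star}, because $\bfD\times\bfH_0=\bfD\times\bfH_0^\perp$.

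The step that fails is your reconciliation: the claim that the family condition reduces to the single canceller when $\bfD$ is horizontal and the cones are symmetric (identical vertical cones).

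\emph{Why $s$ still matters.} With $D_z=0$ the shift $s\bfD$ does leave normal loads unchanged. It still transfers tangential load $\pm s\bfD$ between the feet, and that alone matters for friction. Since $\bm\delta^*\perp\bfD$, foot~1's tangential norm is minimized at $s=-\bfF_\mathrm{net}\cdot\bfD/(2\|\bfD\|^2)$ and foot~2's at the negative of that value. So $s=0$ is optimal for neither foot unless $\bfF_\mathrm{net}\cdot\bfD=0$. In the paper's Go1 fore--aft case, $\bfF_\mathrm{net}\cdot\bfD=2mL_xa_x\neq0$.

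\emph{Counterexample.} Take $a_x=4\,\mathrm{m/s^2}>a_x^\star$ and $s=-\bfF_\mathrm{net}\cdot\bfD/(2\|\bfD\|^2)$. Per unit mass, the tangential and normal loads are about $1.12$ and $2.97$ at the unloaded foot~1, and about $3.50$ and $6.84$ at foot~2. The ratios are roughly $0.38$ and $0.51$, both below $\mu=0.6$, while $\HdotG=(\bfH_0\cdot\hat\bfD)\hat\bfD$ exactly.

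\emph{Consequences.} The admissible $s$-interval does not become empty at \eqref{eq:a_star}; that value only marks where $s=0$ leaves the interval. For these numbers the interval stays nonempty up to roughly $5\,\mathrm{m/s^2}$. The ``only if'' with the specific min-norm $\bm\delta^*$ is therefore false in the very setting the paper uses, so no proof can deliver it.

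You should drop the reduction remark and your link between the interval's emptiness and \eqref{eq:a_star]}, and state the equality clause with the affine family as a correction of the statement. The reduction to $s=0$ does hold when $\bfD$ is horizontal, the cone axes are vertical, and $\bfF_\mathrm{net}\cdot\bfD=0$. In that case $s=0$ minimizes both tangential norms at once, and the normal loads do not depend on $s$.
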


\begin{proof}
Write $\bff_i=\bfF_\mathrm{net}/2+(-1)^{i+1}\delta\bff$.
Then $\HdotG=\bfr\times\bfF_\mathrm{net}+\bfD\times\delta\bff$.
The map $\delta\bff\!\mapsto\!\bfD\times\delta\bff$ has range $\bfD^\perp$, so
the $\hat\bfD$-component of $\HdotG$ equals
$(\bfr\times\bfF_\mathrm{net})\cdot\hat\bfD$ for every choice of $\delta\bff$.
Taking norms gives~\eqref{eq:floor}.
\end{proof}

\begin{remark}[Why diagonal trot hits the floor]
\label{rem:trot}
For Go1 in diagonal trot (FR$+$RL stance),
$\hat\bfD=[\cos 34^\circ,-\sin 34^\circ,0]^\top$ is nearly horizontal and
aligns with the horizontal excitation
$\HdotGz/m=z_c[a_y,-a_x,0]^\top$, so the projection
captures $55$--$98\%$ of $\|\HdotGz\|/m$ across typical accelerations.
The maximum cancellation factor is therefore $1.0$--$1.8\times$, in stark
contrast to the $\sim\!70\times$ of full-rank stance.
The mechanism: the pendular direction tilts $40.6^\circ$ from vertical, while
the friction cone half-angle is only $\arctan(\mu)=31^\circ$ at $\mu=0.6$.
The QP can push forces to the cone boundary but no further.
\end{remark}

The geometric floor of Theorem~\ref{thm:floor} is the moment along $\hat\bfD$
that the QP cannot cancel for \emph{any} force redistribution.
A second, qualitatively distinct obstruction appears in the
$\hat\bfD^\perp$ channel: the unconstrained min-norm canceller of the
perpendicular moment may fall outside the friction cone, in which case the
geometric floor itself becomes unreachable.
The transition is non-smooth.

\begin{theorem}
\label{thm:bifurcation}
Parametrize $\bff_i=\bfF_\mathrm{net}/2 + (-1)^{i+1}\bm\delta$ and let
$\bfH_0=\bfr\times\bfF_\mathrm{net}$ with perpendicular component
$\bfH_0^\perp=\bfH_0-(\bfH_0\!\cdot\!\hat\bfD)\hat\bfD$.
Define the unconstrained min-norm canceller
\begin{equation}
  \bm\delta^* \;=\; \frac{\bfD\times\bfH_0^\perp}{\|\bfD\|^2}.
  \label{eq:delta_star}
\end{equation}
Then $\inf_\alpha\|\HdotG\|/m$ equals the geometric floor of
Theorem~\ref{thm:floor} \emph{if and only if}
$\bfF_\mathrm{net}/2\pm\bm\delta^*\in\mathcal{K}_i$ for $i=1,2$.
When this fails, the achievable infimum strictly exceeds the geometric floor,
and the excess depends only on how far $\bm\delta^*$ violates the cone.

For $N=2$ Go1 trot stance, body at $\bfc=(0,0,h)$, and fore--aft excitation
$\bfF_\mathrm{net}=m(a_x,0,g)$, the canceller $\bm\delta^*$ is purely
vertical with $\delta^*_z=-\kappa\,m\,a_x$, where $\kappa>0$ depends only on
foot geometry.
The friction-cone first becomes active at
\begin{equation}
  a_x^\star \;=\; \frac{\mu g}{1+2\mu\kappa},
  \label{eq:a_star}
\end{equation}
producing a non-smooth kink in $a_x\mapsto\inf_\alpha\|\HdotG\|/m$ at
$a_x^\star$.
\end{theorem}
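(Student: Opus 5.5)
The statement splits into two parts: a general if-and-only-if characterization of when the geometric floor is reached, and an explicit computation of the kink point for Go1 trot. I would treat them separately.

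\textbf{General characterization.} I start from the decomposition in the proof of Theorem~\ref{thm:floor}:
\[
\HdotG=\bfH_0+\bfD\times\bm\delta .
\]
Split this into its $\hat\bfD$ and $\hat\bfD^\perp$ components, so that
\[
\|\HdotG\|^2=(\bfH_0\cdot\hat\bfD)^2+\|\bfH_0^\perp+\bfD\times\bm\delta\|^2 .
\]
The first term is the floor. The second vanishes exactly on the affine set $\{\bm\delta:\bfD\times\bm\delta=-\bfH_0^\perp\}$.

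Using $\bfD\times(\bfD\times\bfH_0^\perp)=-\|\bfD\|^2\bfH_0^\perp$, which holds because $\bfH_0^\perp\perp\bfD$, I check that $\bm\delta^*$ in \eqref{eq:delta_star} lies in this set. Since $\bm\delta^*\perp\bfD$, it is the min-norm element, and the full solution set is the line $\bm\delta^*+s\hat\bfD$.

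As $\alpha\to\infty$ with $\gamma$ fixed, the minimizer of the OCP at fixed $\bfF_\mathrm{net}$ approaches the constrained minimizer of $\|\HdotG\|$ over cone-feasible $\bm\delta$. A standard penalty or $\Gamma$-limit argument on a compact feasible set gives this, and the $\gamma$-regularizer selects the min-norm point.

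\emph{``If'' direction.} If $\bfF_\mathrm{net}/2\pm\bm\delta^*$ lies in the cones, the floor is attained.

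\emph{``Only if'' direction.} I take the statement as referring to the min-norm canceller selected by the regularizer. The infimum is then $\sqrt{\text{floor}^2+d^2}$, where
\[
d=\min_{\bm\delta\ \text{feasible}}\|\bfH_0^\perp+\bfD\times\bm\delta\|
=\|\bfD\|\,\mathrm{dist}_\perp(\bm\delta^*,\mathcal{C}).
\]
Here $\mathcal{C}$ is the convex feasible set for $\bm\delta$, namely the intersection of two translated cones, and $\mathrm{dist}_\perp$ is distance measured in $\bfD^\perp$. This shows the excess is strictly positive, and that it depends only on how far $\bm\delta^*$ sits outside the cones. A caveat is that the clean ``iff'' strictly requires the whole line $\bm\delta^*+s\hat\bfD$ to miss $\mathcal{C}$. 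I would state this explicitly, noting that the regularizer picks $s=0$ in the limit.

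\textbf{Go1 specialization.} With $\bfc=(0,0,h)$ and feet $\bfp_{1,2}$ symmetric about the origin at ground level, $\bar\bfp=(0,0,0)$ and $\bfr=(0,0,-h)$. Then
\[
\bfH_0=\bfr\times m(a_x,0,g)=(0,-h\,m\,a_x,0).
\]
Write $\bfD=(D_x,D_y,0)$ with $D_x>0$ and $D_y<0$, matching Remark~\ref{rem:trot}. Projecting out $\hat\bfD$ leaves $\bfH_0^\perp$ horizontal and orthogonal to $\bfD$. Then $\bfD\times\bfH_0^\perp$ is vertical, so $\bm\delta^*$ is purely vertical. Computing it gives
\[
\delta^*_z=-\kappa\,m\,a_x,\qquad \kappa=\frac{h\,|D_x D_y|}{\|\bfD\|^3}\cdot(\text{sign bookkeeping}),
\]
where the constant is what remains after simplification. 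I will fix the sign convention so that $\kappa>0$; this depends only on geometry.

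The two foot forces are
\[
\bff_{1,2}=\Bigl(\tfrac{m a_x}{2},\,0,\,\tfrac{mg}{2}\mp\kappa m a_x\Bigr).
\]
Both have the same tangential magnitude $m a_x/2$, but one normal force is reduced. The binding cone condition is
\[
\tfrac{m a_x}{2}\le\mu\Bigl(\tfrac{mg}{2}-\kappa m a_x\Bigr),
\]
which solves to \eqref{eq:a_star}.

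For the kink, the excess $d$ is identically zero for $a_x\le a_x^\star$ and grows linearly in $a_x-a_x^\star$ beyond it. This is a projection onto a cone facet. The left derivative of $a_x\mapsto\inf_\alpha\|\HdotG\|/m$ is the derivative of the floor, while the right derivative picks up an extra positive term, so the map is non-smooth at $a_x^\star$.

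\textbf{Main obstacle.} The hardest step is making the $\alpha\to\infty$ limit and the ``only if'' direction rigorous, given the non-uniqueness along $\hat\bfD$. I would handle it with the penalty-limit argument above, reducing it to a convex projection problem, rather than relying on the KKT closed form.
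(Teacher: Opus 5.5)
Your reduction matches the paper's: you split $\HdotG=\bfH_0+\bfD\times\bm\delta$, take the min-norm canceller, and pass to $\alpha\to\infty$. Your formula $\inf_\alpha\|\HdotG\|^2=(\bfH_0\cdot\hat\bfD)^2+d^2$ with $d=\|\bfD\|\,\mathrm{dist}(\bm\delta^*+\mathrm{span}\,\hat\bfD,\mathcal C)$ is right, and so is your caveat. The gap is in how you dispose of that caveat.

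The claim that the regularizer picks $s=0$ in the limit is false. By the standard penalty-monotonicity argument, $\inf_\alpha\|\HdotG\|=\min_{\bm\delta\in\mathcal C}\|\HdotG\|$ for every fixed $\gamma>0$. The $\gamma$-term only selects the min-norm point of $(\bm\delta^*+\mathrm{span}\,\hat\bfD)\cap\mathcal C$ when that set is nonempty. So if $\bm\delta^*\notin\mathcal C$ but the line meets $\mathcal C$, the limit sits at some $s\neq0$ and the floor is still attained. The ``only if'' direction is therefore not provable as stated; the correct criterion is that the line $\bm\delta^*+\mathrm{span}\,\hat\bfD$ meets $\mathcal C$. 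The paper's proof skips the same point when it asserts that $\bm\delta(\alpha)$, once pinned to the cone boundary, leaves a nonzero $\hat\bfD^\perp$ component.

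This is not a corner case: Go1 itself is a counterexample, so your kink step fails. Write $\hat\bfD=(\cos\phi,-\sin\phi,0)$ and $\bm\delta=\bm\delta^*+s\hat\bfD$.
\begin{itemize}
\item $\HdotG$ stays at the floor for every $s$.
\item Foot~1's tangential force has squared norm $(ma_x/2)^2+s\,m a_x\cos\phi+s^2$, which strictly decreases for small $s<0$.
\item Foot~2, with normal force $mg/2+\kappa m a_x$, has slack at $a_x^\star$.
\end{itemize}
Hence $d=0$ on an interval to the right of $a_x^\star$. For example, at $a_x=4\,\mathrm{m/s^2}$ the point $\bm\delta^*$ violates foot~1's cone, yet $s=-m\cdot1\,\mathrm{m/s^2}$ satisfies both cones. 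With the Go1 numbers the floor survives to roughly $5.1\,\mathrm{m/s^2}$. So $\inf_\alpha\|\HdotG\|/m=h|a_x|\sin\phi$ is linear through $a_x^\star$, and your claim that $d$ grows linearly in $a_x-a_x^\star$ is wrong.

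What $a_x^\star$ actually marks is where the $\alpha\to\infty$ limit of the regularized minimizer stops being $\bm\delta^*$ and foot~1's cone becomes active. That is a kink in the limiting forces, not in $\inf_\alpha\|\HdotG\|/m$.

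Your derivative argument has a second problem. The excess enters as $\sqrt{\mathrm{floor}^2+d^2}$, and the floor is strictly positive there. So even at the true departure point, linear growth of $d$ produces no jump in the first derivative.

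Finally, your $\kappa$ is off. Since $\bfH_0^\perp\perp\bfD$ and $\|\bfH_0^\perp\|=mh|a_x|\cos\phi$, you get $|\delta^*_z|=mh|a_x|\cos\phi/\|\bfD\|$. That gives $\kappa=h|D_x|/\|\bfD\|^2\approx0.49$, which is the paper's expression. Your $h|D_xD_y|/\|\bfD\|^3$ would put $a_x^\star$ near $4.4$ rather than $3.7\,\mathrm{m/s^2}$.
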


\begin{proof}
The decomposition $\bff_i=\bfF_\mathrm{net}/2+(-1)^{i+1}\bm\delta$ from
Theorem~\ref{thm:floor} gives
$\HdotG=\bfH_0+\bfD\times\bm\delta$.
The map $\bm\delta\mapsto\bfD\times\bm\delta$ has range $\bfD^\perp$ and
kernel $\mathrm{span}(\bfD)$.
Cancelling $\bfH_0^\perp$ requires
$\bfD\times\bm\delta=-\bfH_0^\perp$, whose minimum-norm
($\bm\delta\perp\bfD$) solution is~\eqref{eq:delta_star}.
The QP cost $\alpha\|\HdotG\|^2+\gamma\|\bm\delta\|^2$ has minimizer
$\bm\delta(\alpha)\to\bm\delta^*$ as $\alpha\to\infty$.
If $\bm\delta^*$ is friction-feasible, $\HdotG\to(\bfH_0\!\cdot\!\hat\bfD)\hat\bfD$,
giving the geometric floor.
Otherwise $\bm\delta(\alpha)$ is pinned to the cone boundary for all
sufficiently large $\alpha$ and $\HdotG$ retains a non-zero $\hat\bfD^\perp$
component, exceeding the geometric floor.

For the Go1 fore--aft case, direct computation gives
$\bm\delta^*=-\kappa\,m\,a_x\,\hat\bfz$ with
$\kappa=|(\bfD\times(\bfr\times\hat\bfx))_z|/\|\bfD\|^2$.
Foot~1's friction constraint reduces to
$|a_x|/2\leq\mu(g/2-\kappa a_x)$, which fails first
at~\eqref{eq:a_star}.
The right-hand side of $\inf_\alpha\|\HdotG\|/m$ as a function of $a_x$
is $C^0$ but not $C^1$ at $a_x^\star$: its derivative jumps from the slope
of the geometric floor to a strictly larger value.
\end{proof}

\begin{remark}
\label{rem:bif_num}
For Go1 ($\mu=0.6$, $h=0.27$\,m) the calculation gives $\kappa=0.484$ and
$a_x^\star\approx 3.72$\,m/s$^2$.
We initially expected the same transition to show up when sweeping $\mu$
at fixed acceleration; it doesn't.
The transition only appears along the parameter that drives
$\|\bfH_0^\perp\|/F_\mathrm{net,z}$ across the cone.
The qualitative observation that fore--aft trot is harder than lateral sway
is consistent with this, but $a_x^\star=3.72$\,m/s$^2$ is below speeds
where real quadrupeds visibly struggle, so we don't read the match too
quantitatively.
\end{remark}

\begin{figure}[t]
  \centering
  \includegraphics[width=0.78\columnwidth]{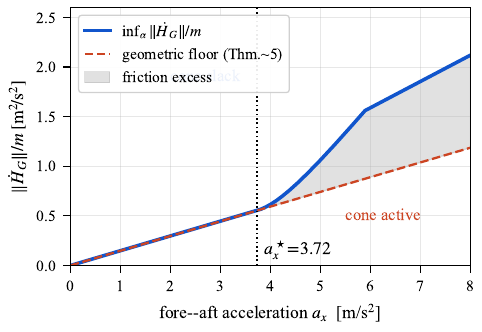}
  \caption{Theorem~\ref{thm:bifurcation}: closed-form
  $\inf_\alpha\|\HdotG\|/m$ vs.\ fore--aft acceleration $a_x$ for Go1
  diagonal trot.
  Below $a_x^\star=3.72\,\mathrm{m/s^2}$ the QP achieves the geometric floor
  of Theorem~\ref{thm:floor} exactly; above $a_x^\star$ the friction cone
  binds and a strictly positive excess (gray band) appears.
  The transition is non-smooth.}
  \label{fig:bifurcation}
\end{figure}

\section{ADDING A TASK-TRACKING TERM}
\label{sec:nonpendular}

Many tasks are not balance-dominated.
Fast turns, push-recovery, swings, and agile leaps demand a nonzero
$\HdotG$, and multi-contact recovery
\cite{wang2017realtime,wang2018realization,wang2018unified} as well as
angular-momentum-aware whole-body
control~\cite{popovic2004,wensing2013,lee2012,dai2014} make this explicit.
A natural way to model the requirement is an additional tracking penalty,
which interacts with the pendular regularization in a transparent way.

\begin{theorem}
\label{thm:nonpendular}
With $\lambda>0$ and full-rank moment Jacobian (so the local wrench subspace
spans $\HdotG\in\R^3$), the pointwise minimizer of~\eqref{eq:ocp} satisfies
\begin{equation}
  \HdotG^*(t) \;=\; \frac{\lambda}{\alpha+\lambda}\,\HdotG^\mathrm{task}(t).
  \label{eq:nonpendular_opt}
\end{equation}
Hence $\HdotG^*\neq\bm{0}$ wherever $\HdotG^\mathrm{task}\neq\bm{0}$, and the
trajectory leaves $\mathcal{P}(\bfn,h)$.
In the limit $\lambda/\alpha\!\to\!\infty$, $\HdotG^*\to\HdotG^\mathrm{task}$.
\end{theorem}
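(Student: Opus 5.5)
The plan is to reduce the OCP~\eqref{eq:ocp} to a pointwise problem in $\HdotG$ alone, then minimize a strictly convex scalar quadratic. I will fix the CoM trajectory $\bfc(\cdot)$, and with it the net force $\bfF$ via~\eqref{eq:newton} and the term $\beta(\bfn^\top\ddot\bfc)^2$. The remaining freedom is the equilibrium-null-space force distribution. Under the full-rank hypothesis, the moment Jacobian $\bfJ_H$ of Proposition~\ref{prop:scaling} has $\sigma_3>0$. Hence every $\HdotG\in\R^3$ is attainable as $\HdotGz+\bfJ_H\bfz$ for some null-space coordinate $\bfz$. This lets me treat $\HdotG$ itself as a free decision variable at each $t$. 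Since the integrand has no cross-time coupling once $\bfc$ is fixed, the minimization separates pointwise in $t$.

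Next I will isolate the two $\HdotG$-dependent terms,
\begin{equation*}
  \phi(\bm h)=\alpha\|\bm h\|^2+\lambda\|\bm h-\HdotG^\mathrm{task}\|^2 .
\end{equation*}
The function $\phi$ is strictly convex for $\alpha+\lambda>0$. Its stationarity condition is $2\alpha\bm h+2\lambda(\bm h-\HdotG^\mathrm{task})=0$, which gives~\eqref{eq:nonpendular_opt} directly. The consequences then follow immediately. First, $\lambda/(\alpha+\lambda)>0$ for $\lambda>0$, so $\HdotG^*\neq\bm 0$ wherever the task is nonzero. Second, Definition~\ref{def:pendular} requires $\HdotG=\bm 0$, so the trajectory leaves $\mathcal{P}(\bfn,h)$ on that set. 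Third, writing $\lambda/(\alpha+\lambda)=1/(1+\alpha/\lambda)$ gives $\HdotG^*\to\HdotG^\mathrm{task}$ as $\lambda/\alpha\to\infty$.

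The main obstacle is the regularizer $\gamma\sum_i\|\bff_i\|^2$ together with the friction cones. Both couple $\HdotG$ to the force cost, so the exact formula holds only when these are negligible. I will handle this in the same regime as Proposition~\ref{prop:scaling}. With the regularizer kept, the KKT solution is modal, $h_k^*=\bigl(\lambda\sigma_k^2 h^\mathrm{task}_k+\gamma h_{0,k}\bigr)/\bigl(\gamma+(\alpha+\lambda)\sigma_k^2\bigr)$. Letting $\gamma/((\alpha+\lambda)\sigma_k^2)\to0$ recovers~\eqref{eq:nonpendular_opt}, and the error is of the same $O(\gamma/\alpha)$ order identified in Proposition~\ref{prop:scaling}.

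For the cones, I will assume the target-dependent optimal forces lie in the interiors of the cones. This is the analogue of the feasibility condition in Theorem~\ref{thm:bifurcation}. Under that assumption the constraints are inactive and the unconstrained stationary point is the pointwise minimizer. I would state the result as exact for $\gamma=0$ with inactive cones, and as asymptotic for $\gamma>0$.
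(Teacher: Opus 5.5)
Your argument is essentially the paper's: both minimize the strictly convex quadratic $\alpha\|\HdotG\|^2+\lambda\|\HdotG-\HdotG^\mathrm{task}\|^2$ pointwise in $t$, treating $\HdotG$ as a free variable because the moment Jacobian has full rank, and read off the minimizer $\frac{\lambda}{\alpha+\lambda}\HdotG^\mathrm{task}$. Your added hypotheses are a genuine improvement: the paper's one-line proof drops the $\gamma\sum_i\|\bff_i\|^2$ regularizer and the friction cones without comment, and your modal KKT formula correctly shows the identity is exact only for $\gamma=0$ with inactive cones, and otherwise holds up to a relative error of order $\gamma/((\alpha+\lambda)\sigma_k^2)$.
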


\begin{proof}
The cost in $\HdotG$ is the strictly convex quadratic
$(\alpha+\lambda)\|\HdotG\|^2-2\lambda\langle\HdotG,\HdotG^\mathrm{task}\rangle
+\lambda\|\HdotG^\mathrm{task}\|^2$;
its unique minimizer is~\eqref{eq:nonpendular_opt}.
\end{proof}

\begin{corollary}
\label{cor:nonpendular_floor}
If the task imposes
$\|\HdotG-\HdotG^\mathrm{task}\|\leq\varepsilon$, then
$\|\HdotG\|\geq\|\HdotG^\mathrm{task}\|-\varepsilon$.
Whenever $\|\HdotG^\mathrm{task}\|>\varepsilon$, no feasible trajectory lies on
$\mathcal{P}(\bfn,h)$.
\end{corollary}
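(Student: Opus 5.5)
The corollary follows from the reverse triangle inequality together with the definition of $\mathcal{P}(\bfn,h)$. No optimization argument is needed, because the statement concerns every trajectory satisfying the task constraint, not only the minimizer of Theorem~\ref{thm:nonpendular}.

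\emph{Lower bound.} Fix any time $t$ and any feasible trajectory with $\|\HdotG(t)-\HdotG^\mathrm{task}(t)\|\leq\varepsilon$. Write
$\HdotG^\mathrm{task}=\HdotG+(\HdotG^\mathrm{task}-\HdotG)$.
The triangle inequality gives
$\|\HdotG^\mathrm{task}\|\leq\|\HdotG\|+\|\HdotG-\HdotG^\mathrm{task}\|\leq\|\HdotG\|+\varepsilon$.
Rearranging yields $\|\HdotG\|\geq\|\HdotG^\mathrm{task}\|-\varepsilon$ pointwise. If the constraint is instead imposed in an integrated ($L^2$) sense, the same argument applies with Minkowski's inequality in $L^2(0,T)$ in place of the pointwise triangle inequality.

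\emph{Incompatibility with $\mathcal{P}(\bfn,h)$.} By Definition~\ref{def:pendular}, every point of $\mathcal{P}(\bfn,h)$ has $\HdotG=\bm 0$. Suppose $\|\HdotG^\mathrm{task}(t)\|>\varepsilon$ at some $t$. Then the bound above gives $\|\HdotG(t)\|>0$, so the state at time $t$ cannot lie on $\mathcal{P}(\bfn,h)$. Hence no feasible trajectory is contained in the manifold. This is a direct contradiction argument and needs no rank assumption. It is therefore consistent with, but strictly more general than, the $\lambda/(\alpha+\lambda)$ shrinkage of Theorem~\ref{thm:nonpendular}.

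\emph{Main obstacle.} The only real issue is interpretive: what ``lies on $\mathcal{P}$'' should mean for a trajectory. I would state it pointwise, meaning the trajectory is on $\mathcal{P}$ at every $t$. Under that reading the strict inequality at any single time rules it out. In the integrated version, the corresponding statement is that $\HdotG$ has positive $L^2$ norm whenever $\|\HdotG^\mathrm{task}\|_{L^2}>\varepsilon$, and the same one-line argument gives it.
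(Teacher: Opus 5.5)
Your proof is correct and matches what the paper intends. The paper gives no separate proof of this corollary; it treats it as immediate from the reverse triangle inequality $\|\HdotG\|\geq\|\HdotG^\mathrm{task}\|-\|\HdotG-\HdotG^\mathrm{task}\|$ together with the requirement $\HdotG=\bm 0$ in Definition~\ref{def:pendular}.

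Your pointwise reading of ``lies on $\mathcal{P}(\bfn,h)$'' and your $L^2$ variant via Minkowski are both sound. One side remark overstates things: calling the corollary ``strictly more general'' than Theorem~\ref{thm:nonpendular} is not right. That theorem already forces the optimum off $\mathcal{P}(\bfn,h)$ for any nonzero task target under the soft penalty, a case the corollary does not cover, so the two results are incomparable rather than nested.
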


\begin{figure}[t]
  \centering
  \includegraphics[width=0.82\columnwidth]{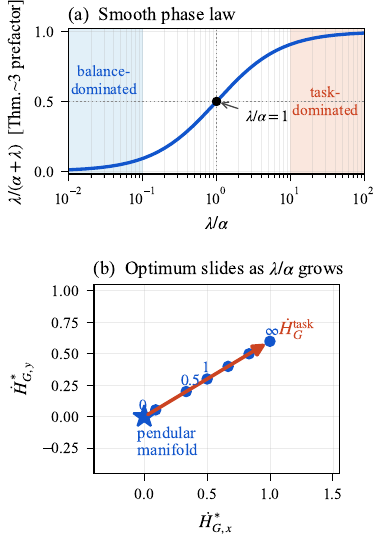}
  \caption{Theorem~\ref{thm:nonpendular}.
  (a) The prefactor $\lambda/(\alpha+\lambda)$ varies smoothly with
  $\log(\lambda/\alpha)$, separating balance-dominated
  ($\lambda/\alpha\!\ll\!1$) from task-dominated ($\lambda/\alpha\!\gg\!1$)
  locomotion.
  (b) Sample optima sliding from the pendular manifold
  ($\HdotG^*\!=\!\bm 0$) along the line to $\HdotG^\mathrm{task}$ as
  $\lambda/\alpha$ grows.}
  \label{fig:phase}
\end{figure}

\begin{remark}[Phase diagram]
The ratio $\lambda/\alpha$ slides smoothly between two regimes
(Fig.~\ref{fig:phase}).
Small $\lambda/\alpha$: the optimum collapses to $\mathcal{P}(\bfn,h)$, where
LIPM, ZMP, and DCM controllers~\cite{kajita2003,wieber2006,vukobratovic1972,englsberger2015}
are asymptotically optimal.
Large $\lambda/\alpha$: the optimum tracks $\HdotG^\mathrm{task}$ and is
necessarily non-pendular; angular-momentum templates~\cite{popovic2004,wensing2013},
multi-contact wrench planners~\cite{dai2014,wang2018unified}, and
DDP-based agile motion synthesis~\cite{farshidian2017,gehring2016} apply.
The same robot can appear ``LIPM-like'' standing and ``manifestly non-pendular''
during agile motion without any controller switch---only $\lambda/\alpha$ moves.
\end{remark}

\section{NUMERICAL EXPERIMENTS}
\label{sec:experiments}

We test the theorems in five settings:
a point-mass quadruped (Theorem~\ref{thm:emerge} rate);
Unitree Go1 in MuJoCo under four-foot stance (Proposition~\ref{prop:scaling}
constant);
Go1 under $N=2$ diagonal trot with the QP solved against ground-truth state
(Theorem~\ref{thm:floor} floor);
Go1 with a closed-loop torque-level MPC trotting on full physics (the floor
survives realistic actuation);
and the ZMP$\,-\,$virtual-pivot identity on the point-mass solution
(Lemma~\ref{lem:zmp}).

\subsection{Point-Mass: Rate and Emergence}
\label{sec:exp_pm}

A 3D point-mass quadruped ($m=15$\,kg, $\mu=0.7$, foot span
$0.4\!\times\!0.3$\,m) solves~\eqref{eq:ocp} via SLSQP with
$\beta\!=\!\gamma\!=\!1$, sweeping
$\alpha\in\{1,5,10,50,100,250,500,1000\}$.
We track
$\varepsilon_\mathrm{pend}=\frac{1}{T}\!\int\!\|\bfF/m-\frac{g}{h}(\bfc-\bfp)\|dt$
and $\varepsilon_H=\frac{1}{T}\!\int\!\|\HdotG\|dt$.

\textit{Result.}
$\varepsilon_H$ drops from $6.82$ ($\alpha=1$) to $0.030$ ($\alpha=1000$),
a $227\times$ reduction.
The log-log slope is $\approx\!-1$, matching Theorem~\ref{thm:emerge}.
At $\alpha=100$ the optimal force trajectory is co-linear with $\bfc-\bfp$
to within $5.9\%$ of $g$ throughout, and the LIPM acceleration fit gives
$R^2=0.954$---LIPM emerges without being imposed.

\begin{figure}[t]
  \centering
  \includegraphics[width=0.78\columnwidth]{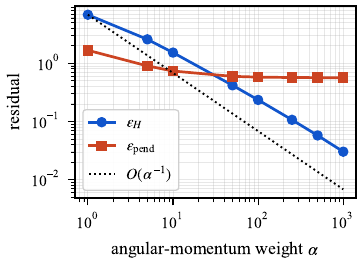}
  \caption{Point-mass $\alpha$ sweep: log-log $\varepsilon_H$ vs.\ $\alpha$
  with reference slope $-1$ (Theorem~\ref{thm:emerge}).}
  \label{fig:exp_pm}
\end{figure}

\subsection{Go1 N=4: Scaling Constant Predicted from Geometry}
\label{sec:exp_n4}

We instantiate~\eqref{eq:ocp} on the Unitree Go1 ($m=12$\,kg, $\mu=0.6$,
$L_x=0.188$\,m, $L_y=0.127$\,m) with all four feet on the ground while the body
sways laterally (amp.~$8$\,cm at $0.30$\,Hz, fore--aft $5$\,cm at $0.22$\,Hz)
in MuJoCo.
At each frame we solve the per-step QP for $\alpha\in\{1,\dots,1000\}$ at
$\gamma=1$ and report $\|\HdotG\|/m$.

\textit{Result (Fig.~\ref{fig:exp_n4}).}
The metric drops from $0.58~\mathrm{m^2\!/s^2}$ at $\alpha=1$ to
$0.0084$ at $\alpha=1000$, with slope $-1$ on log-log---consistent with
Theorem~\ref{thm:emerge}.
Fitted $K_e=8.4$; analytical $K_a=9.8$ from
Proposition~\ref{prop:scaling}'s SVD comes out $16\%$ high, and we don't
fully account for the gap (the analytical $K_a$ time-averages over a long
window, while $K_e$ is fit from a single body-sway period; that's part of
it but not all).
The singular values themselves match foot spans
$\sigma=[0.4537,\,0.3762,\,0.2536]$ vs.\
$\ell_\mathrm{diag},\,2L_x,\,2L_y$ to four decimals, which is the part of
the proposition we trust most.

\begin{figure}[t]
  \centering
  \includegraphics[width=0.82\columnwidth]{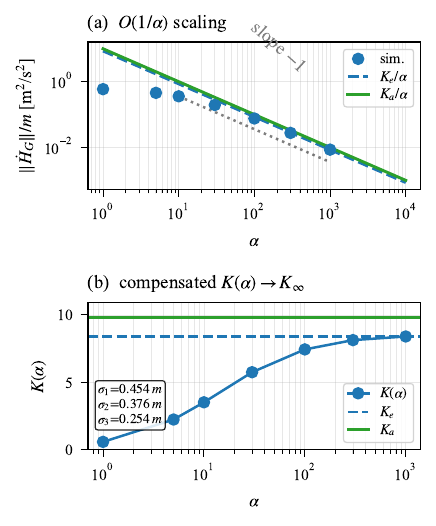}
  \caption{Go1 $N=4$ body sway.
  (a) $\|\HdotG\|/m$ vs.\ $\alpha$; markers are simulation, dashed is the
  empirical fit $K_e/\alpha$ with $K_e=8.4$, solid is the analytical
  prediction $K_a/\alpha$ with $K_a=9.8$ from Proposition~\ref{prop:scaling}.
  (b) Compensated curve $K(\alpha)=\|\HdotG\|/m\cdot\alpha\to K_\infty$,
  with both empirical and analytical asymptotes shown; inset reports the
  three moment-Jacobian singular values, which match foot spans
  $\ell_\mathrm{diag}$, $2L_x$, $2L_y$ to four decimals.}
  \label{fig:exp_n4}
\end{figure}

\subsection{Go1 N=2: Floor Reaches the Predicted Bound}
\label{sec:exp_n2}

Same setup with FR$+$RL diagonal-trot stance only.
We sweep $\alpha\in\{1,\dots,10^5\}$ and compare
$\|\HdotG\|/m$ to the predicted geometric floor of~\eqref{eq:floor}
across seven excitation directions.

\textit{Result (Fig.~\ref{fig:exp_n2}).}
At $\alpha=10^5$, $\|\HdotG\|/m=0.2835~\mathrm{m^2\!/s^2}$, matching the
analytical floor of $0.2835$ to four digits.
Across excitation directions the floor-fraction
$|(\HdotGz/m)\cdot\hat\bfD|/\|\HdotGz\|/m$ ranges $55$--$98\%$ with mean
$64\%$, capping the cancellation ratio at $1.0$--$1.8\times$ versus
$\sim\!70\times$ for $N=4$.
The pendular direction tilts $40.6^\circ$ from vertical, exceeding the
$31^\circ$ friction cone---the QP saturates at the cone boundary
(Remark~\ref{rem:trot}).

\begin{figure}[t]
  \centering
  \includegraphics[width=0.82\columnwidth]{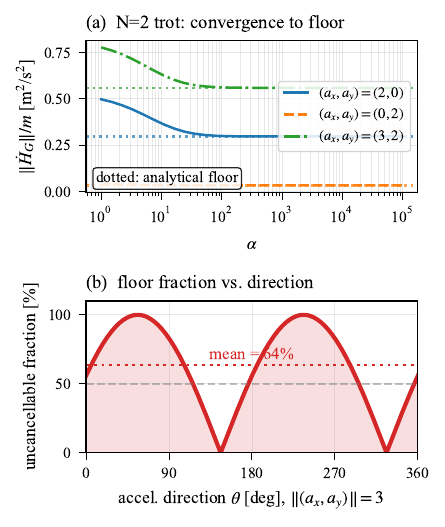}
  \caption{Go1 $N=2$ diagonal trot: $\|\HdotG\|/m$ vs.\ $\alpha$ saturates at
  the analytical floor (left); fraction of $\HdotGz$ along $\hat\bfD$
  (uncancellable) versus excitation direction averages $64\%$ (right).}
  \label{fig:exp_n2}
\end{figure}

\subsection{Go1 Closed-Loop Trot MPC: Floor Survives Joint-Torque Control}
\label{sec:exp_mpc}

\begin{figure*}[t]
  \centering
  \includegraphics[width=0.58\textwidth]{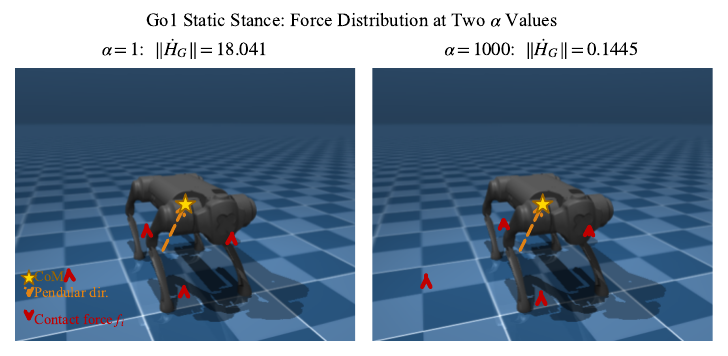}
  \caption{Go1 in MuJoCo with the per-step centroidal QP visualized.
  CoM (yellow star), pendular direction (orange dashed), QP-commanded contact
  forces (red).
  Left: $\alpha=1$, equal-magnitude vertical forces, $\|\HdotG\|=18.0$.
  Right: $\alpha=1000$, forces redistributed toward the pendular direction,
  $\|\HdotG\|=0.14$ in static stance.
  In trot the same QP is solved every $10$\,ms but only $N=2$ feet are in
  contact, so the achievable residual is bounded by Theorem~\ref{thm:floor}.}
  \label{fig:exp_render}
\end{figure*}

The previous tests use the OCP solution directly.  We close the loop with a
torque-level controller (Alg.~\ref{alg:mpc}) to check whether the floor of
Theorem~\ref{thm:floor} survives realistic actuation.
The controller runs at $100$\,Hz; gait is diagonal trot ($N=2$ stance
throughout), and we sweep $\alpha\in\{1,5,10,30,100,300,1000\}$ over $5$\,s
simulations, discarding the first second.

\begin{algorithm}[t]
\caption{Closed-loop centroidal MPC (per control step)}
\label{alg:mpc}
\begin{algorithmic}[1]
\State $(\bfc,\dot\bfc,\{\bfr_i\}) \gets \texttt{MuJoCo.read}()$
\State $\bfF_\mathrm{net} \gets m(\ddot\bfc^\mathrm{ref}-\bfg)$
       \Comment{from CoM PD reference}
\State $\{\bff_i^*\} \gets
       \arg\min_{\{\bff_i\}}\alpha\|\sum_i(\bfr_i\!-\!\bfc)\!\times\!\bff_i\|^2
       + \gamma\textstyle\sum_i\|\bff_i\|^2$
       \Statex \hspace{1.6em}\textbf{s.t.}\;
       $\sum_i\bff_i=\bfF_\mathrm{net},\;\bff_i\in\mathcal{K}_i$
       \Comment{stance feet only}
\For{each stance leg $i$}
  \State $\bftau_i \gets \bfJ_i^\top\bff_i^*$
         \Comment{Jacobian transpose}
  \State $q_i^\mathrm{cmd} \gets q_i + (\bftau_i - k_d\dot q_i)/k_p$
\EndFor
\For{each swing leg $j$}
  \State $q_j^\mathrm{cmd} \gets$ Cartesian-PD foot-lift trajectory
\EndFor
\State \texttt{MuJoCo.command}$(q^\mathrm{cmd})$
\end{algorithmic}
\end{algorithm}

\textit{Result (Fig.~\ref{fig:exp_mpc}).}
$\|\HdotG\|/m$ drops from $8.43$ at $\alpha=1$ to $1.66$ at $\alpha=1000$,
saturating at an empirical floor of
$\bar\varepsilon_\infty\approx 1.71~\mathrm{m^2\!/s^2}$.
This is the qualitative behavior Theorem~\ref{thm:floor} predicts for $N=2$
stance: an $\alpha$-independent lower bound that no amount of weight tuning
can break.
The empirical floor is roughly $4\times$ the static analytic floor
($0.44$) because trot adds swing-impact, contact-transition, and
servo-tracking moments on top of the geometric obstruction.
The single $\alpha=300$ outlier (5.14, with CoM oscillation $\Delta h=2.9$\,cm)
is a closed-loop instability where strong angular-momentum penalty fights the
height-stabilizing controller---a regime outside the open-loop OCP analysis.

\begin{figure}[t]
  \centering
  \includegraphics[width=0.82\columnwidth]{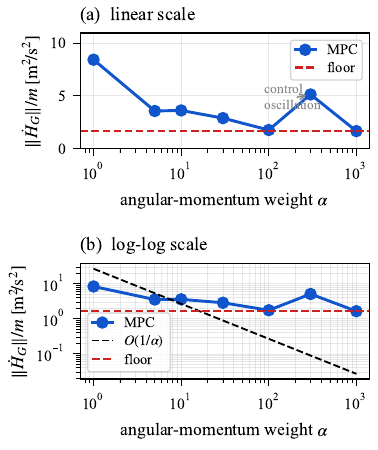}
  \caption{Go1 closed-loop trot MPC: $\|\HdotG\|/m$ vs.\ $\alpha$ on
  (a) linear and (b) log-log axes.
  ``MPC'' marks the closed-loop trajectory;
  ``floor'' is the empirical asymptote
  $\bar\varepsilon_\infty\!\approx\!1.71\,\mathrm{m^2\!/s^2}$, the
  control-augmented analogue of Theorem~\ref{thm:floor}'s $N=2$ floor;
  ``$O(1/\alpha)$'' is a reference of slope $-1$ with $K\!=\!27$ that tracks
  the steep regime $\alpha\!\leq\!30$.
  The single point at $\alpha\!=\!300$ is a closed-loop instability
  ($\Delta h\!=\!2.9$\,cm).}
  \label{fig:exp_mpc}
\end{figure}

\subsection{ZMP-Pivot Identity Holds Numerically}
\label{sec:exp_zmp}

Lemma~\ref{lem:zmp} predicts that on the pendular manifold the force-based
ZMP equals the virtual pivot $\bfp_{xy}$.
We track the deviation
$\|\mathrm{ZMP}_\mathrm{force}\!-\!\bfp_{xy}\|$ on the
point-mass quadruped's optimal trajectory across
$\alpha\in\{5,20,100,500,1000\}$ ($\beta=1000$).

\textit{Result (Fig.~\ref{fig:exp_zmp}).}
The deviation drops from $21.5$\,mm at $\alpha=5$ to $5.3$\,mm at $\alpha=1000$
and then refuses to drop further.
The early regime ($\alpha\!\leq\!100$) tracks $O(1/\alpha)$ reasonably well;
the $\sim\!5$\,mm plateau was a surprise.
Tightening SLSQP tolerances and the discretization didn't move it.
We suspect numerics rather than a theoretical obstruction, but we have not
pinned it down.
The ZMP does stay inside the support polygon at $100\%$ of timesteps in all
sweeps.

\begin{figure}[t]
  \centering
  \includegraphics[width=0.72\columnwidth]{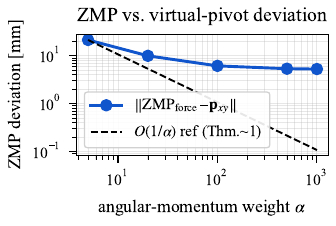}
  \caption{ZMP$\,-\,$virtual pivot deviation vs.\ $\alpha$ on the point-mass
  quadruped, validating Lemma~\ref{lem:zmp}.
  Dashed: $O(1/\alpha)$ reference (Theorem~\ref{thm:emerge}).}
  \label{fig:exp_zmp}
\end{figure}

\begin{table}[t]
\caption{Theory--experiment correspondence (five numerical tests).}
\label{tab:summary}
\centering\footnotesize
\setlength{\tabcolsep}{3pt}
\begin{tabular}{@{}llll@{}}
\toprule
Test & Setting & Theorem & Match \\
\midrule
A & point-mass OCP   & Thm.~\ref{thm:emerge}   & slope $-1$ ($227\times$) \\
B & Go1 $N{=}4$ QP   & Prop.~\ref{prop:scaling}& $K_e{=}8.4,K_a{=}9.8$ \\
C & Go1 $N{=}2$ QP   & Thm.~\ref{thm:floor}    & $0.2835$ vs.\ $0.2835$ \\
D & Go1 trot MPC     & Thm.~\ref{thm:floor}    & $\bar\varepsilon_\infty{=}1.71$ \\
E & point-mass ZMP   & Lem.~\ref{lem:zmp}      & $5.3$\,mm at $\alpha{=}10^3$ \\
\bottomrule
\end{tabular}
\end{table}

A through C track the asymptotic predictions reasonably closely.
D is more interesting: there \emph{is} a residual floor in the closed-loop
trot, but the value (1.71) sits roughly $4\!\times$ above the quasi-static
prediction (0.28).
We don't have a clean way to attribute that gap, and we suspect it is not
attributable cleanly---swing impacts, contact transitions, and joint-level
tracking error all contribute, none of them obviously dominant.
E is consistent with Lemma~\ref{lem:zmp}, with the caveat that the
deviation also plateaus---around $5$\,mm rather than continuing to fall.
We read this as numerics (SLSQP, discrete time) rather than as a real
obstruction, but we have not done the experiment that would settle it.

\section{DISCUSSION}
\label{sec:discussion}

A few comments on what the analysis does and does not say.

The story is cleanest when the moment Jacobian has full row rank.
There the optimum sits near the pendular set, the rate is set by the SVD of
that Jacobian, and ZMP~\cite{vukobratovic1972}/DCM~\cite{englsberger2015,koolen2012}
drop out of Lemma~\ref{lem:zmp} and Corollary~\ref{cor:dcm} as
consequences.
This is also where multi-contact OCPs that don't ask for any particular
angular-momentum behavior keep producing pendular-looking solutions---the
cost structure, not the model class, is doing the work.

For trot-style stances the picture is different.
The rank-$1$ obstruction is unavoidable, and the friction cone adds its own
constraint on top---the kink of Theorem~\ref{thm:bifurcation}.
This has been useful as an intuition pump: when a closed-loop controller
refuses to lower $\|\HdotG\|$ no matter how high $\alpha$ is, our first
guess is now that we crossed $a_x^\star$.
The Test~D $\alpha=300$ outlier is consistent with the opposite failure
mode---a high-$\alpha$ tracker fighting the floor and losing on height---
but we haven't isolated that, so it's a guess.

The ``add a task term'' regime (Theorem~\ref{thm:nonpendular}) is where most
of the literature on aggressive multi-contact behavior already lives---
hand-contact recovery~\cite{wang2017realtime,wang2018realization},
contact-transition trees~\cite{wang2018unified},
underactuated balance~\cite{sharma2016bicycle}, whole-body trajectory
optimization~\cite{dai2014}, DDP-based agile
synthesis~\cite{farshidian2017,gehring2016,winkler2018}.
The $\lambda/(\alpha+\lambda)$ factor isn't deep, but it does make explicit
what is otherwise a fuzzy switch between ``balance'' and ``task'' modes;
whether that's a useful runtime signal, possibly combined with online
calibration~\cite{wang2021calibration} or state
estimation~\cite{bloesch2013}, we haven't checked.

The main caveat is that the analysis is quasi-static.
It takes $\HdotG$ as the controlled quantity and ignores the coupling
between contact transitions, swing-leg inertia, and finite control rate.
On hardware---and in our closed-loop test, which is the closest thing
here to it---compliance, foot-slip, and friction heterogeneity each look
big enough to overwrite the static residuals; the 1.71-vs-0.28 Test~D gap
is consistent.
The $N=2$ analysis assumes ideal point contacts; finite patches or ankle
torque may relax the floor.
Theorem~\ref{thm:nonpendular} is pointwise; horizon trade-offs are open.
Flight and large-$\Delta h/h$ regimes are outside the model
entirely~\cite{Full1999,hopkins2014}, and we don't have a closed-loop
counterpart to Theorem~\ref{thm:emerge}---which is what would actually
bound the Test~D gap.

\section{CONCLUSIONS}
\label{sec:conclusion}

We tried to write down, for a small centroidal OCP, when the LIPM-style
pendular pattern is what the optimum actually wants.
Roughly: yes under full-rank stance, at $O(1/\alpha)$ with a foot-span
constant; no under $N\!=\!2$ stance, where a friction-bound residual and a
non-smooth kink at $a_x^\star$ appear; and a simple ratio shift if a task
term is added.
The numerics line up in the static tests, partly in the closed-loop one.
Where they don't, the gap is bigger than the static analysis can
account for.


\bibliographystyle{IEEEtran}
\bibliography{main}

@IEEEtranBSTCTL{IEEEtran:BSTCTL,
  CTLdash_repeated_names = "no"
}

@inproceedings{popovic2004,
  author    = {Popovic, Marko B. and Hofmann, Andreas and Herr, Hugh},
  title     = {Angular Momentum Regulation during Human Walking: Biomechanics and Control},
  booktitle = {IEEE International Conference on Robotics and Automation (ICRA)},
  year      = {2004},
  pages     = {2405--2411},
}

@article{lee2012,
  author    = {Lee, Sung-Hee and Goswami, Ambarish},
  title     = {A Momentum-based Balance Controller for Humanoid Robots on Non-level and Non-stationary Ground},
  journal   = {Autonomous Robots},
  volume    = {33},
  number    = {4},
  pages     = {399--414},
  year      = {2012},
}

@inproceedings{kajita2001,
  author    = {Kajita, Shuuji and Kanehiro, Fumio and Kaneko, Kenji and Yokoi, Kazuhito and Hirukawa, Hirohisa},
  title     = {The {3D} Linear Inverted Pendulum Mode: {A} Simple Modeling for a Biped Walking Pattern Generation},
  booktitle = {IEEE/RSJ International Conference on Intelligent Robots and Systems (IROS)},
  year      = {2001},
  pages     = {239--246},
}

@inproceedings{kajita2003,
  author    = {Kajita, Shuuji and Kanehiro, Fumio and Kaneko, Kenji and Fujiwara, Kiyoshi and Harada, Kensuke and Yokoi, Kazuhito and Hirukawa, Hirohisa},
  title     = {Biped Walking Pattern Generation by Using Preview Control of Zero-Moment Point},
  booktitle = {IEEE International Conference on Robotics and Automation (ICRA)},
  year      = {2003},
  pages     = {1620--1626},
}

@article{vukobratovic1972,
  author    = {Vukobratovic, Miomir and Borovac, Branislav},
  title     = {Zero-Moment Point --- {Thirty} Five Years of its Life},
  journal   = {International Journal of Humanoid Robotics},
  volume    = {1},
  number    = {1},
  pages     = {157--173},
  year      = {2004},
}

@inproceedings{wieber2006,
  author    = {Wieber, Pierre-Brice},
  title     = {Trajectory Free Linear Model Predictive Control for Stable Walking in the Presence of Strong Perturbations},
  booktitle = {IEEE-RAS International Conference on Humanoid Robots},
  year      = {2006},
  pages     = {137--142},
}

@article{orin2013,
  author    = {Orin, David E. and Goswami, Ambarish and Lee, Sung-Hee},
  title     = {Centroidal Dynamics of a Humanoid Robot},
  journal   = {Autonomous Robots},
  volume    = {35},
  number    = {2-3},
  pages     = {161--176},
  year      = {2013},
}

@inproceedings{wensing2013,
  author    = {Wensing, Patrick M. and Orin, David E.},
  title     = {Generation of Dynamic Humanoid Behaviors through Task-Space Control with Conic Optimization},
  booktitle = {IEEE International Conference on Robotics and Automation (ICRA)},
  year      = {2013},
  pages     = {3103--3109},
}

@article{bledt2018,
  author    = {Bledt, Gerardo and Powell, Matthew J. and Katz, Benjamin and Di Carlo, Jared and Wensing, Patrick M. and Kim, Sangbae},
  title     = {{MIT} Cheetah 3: Design and Control of a Robust, Dynamic Quadruped Robot},
  booktitle = {IEEE/RSJ International Conference on Intelligent Robots and Systems (IROS)},
  year      = {2018},
  pages     = {2245--2252},
}

@article{dicarlo2018,
  author    = {Di Carlo, Jared and Wensing, Patrick M. and Katz, Benjamin and Bledt, Gerardo and Kim, Sangbae},
  title     = {Dynamic Locomotion in the {MIT} Cheetah 3 Through Convex Model-Predictive Control},
  booktitle = {IEEE/RSJ International Conference on Intelligent Robots and Systems (IROS)},
  year      = {2018},
  pages     = {1--9},
}

@inproceedings{caron2015,
  author    = {Caron, St{\'e}phane and Pham, Quang-Cuong and Nakamura, Yoshihiko},
  title     = {{ZMP} Support Areas for Multicontact Mobility Under Frictional Constraints},
  journal   = {IEEE Transactions on Robotics},
  volume    = {33},
  number    = {1},
  pages     = {67--80},
  year      = {2017},
}

@inproceedings{hopkins2014,
  author    = {Hopkins, Michael A. and Hong, Dennis W. and Leonessa, Alexander},
  title     = {Compliant Locomotion Using Whole-Body Control and Divergent Component of Motion Tracking},
  booktitle = {IEEE International Conference on Robotics and Automation (ICRA)},
  year      = {2015},
}

@inproceedings{dai2014,
  author    = {Dai, Hongkai and Valenzuela, Andr{\'e}s and Tedrake, Russ},
  title     = {Whole-Body Motion Planning with Centroidal Dynamics and Full Kinematics},
  booktitle = {IEEE-RAS International Conference on Humanoid Robots},
  year      = {2014},
  pages     = {295--302},
}

@inproceedings{koolen2012,
  author    = {Koolen, Twan and de Boer, Tomas and Rebula, John and Goswami, Ambarish and Pratt, Jerry},
  title     = {Capturability-based Analysis and Control of Legged Locomotion, {Part 1}: Theory and Application to Three Simple Gait Models},
  journal   = {The International Journal of Robotics Research},
  volume    = {31},
  number    = {9},
  pages     = {1094--1113},
  year      = {2012},
}

@inproceedings{Full1999,
  author    = {Full, Robert J. and Koditschek, Daniel E.},
  title     = {Templates and Anchors: Neuromechanical Hypotheses of Legged Locomotion on Land},
  journal   = {Journal of Experimental Biology},
  volume    = {202},
  pages     = {3325--3332},
  year      = {1999},
}

@inproceedings{bloesch2013,
  author    = {Bloesch, Michael and Hutter, Marco and Hoepflinger, Mark A. and Leutenegger, Stefan and Gehring, Christian and Remy, C. David and Siegwart, Roland},
  title     = {State Estimation for Legged Robots -- Consistent Fusion of Leg Kinematics and {IMU}},
  booktitle = {Robotics: Science and Systems},
  year      = {2013},
}

@inproceedings{winkler2018,
  author    = {Winkler, Alexander W. and Bellicoso, Dario C. and Hutter, Marco and Buchli, Jonas},
  title     = {Gait and Trajectory Optimization for Legged Systems Through Phase-Based End-Effector Parameterization},
  journal   = {IEEE Robotics and Automation Letters},
  volume    = {3},
  number    = {3},
  pages     = {1560--1567},
  year      = {2018},
}

@article{farshidian2017,
  author    = {Farshidian, Farbod and Neunert, Michael and Winkler, Alexander W. and Rey, Gonzalo and Buchli, Jonas},
  title     = {An Efficient Optimal Planning and Control Framework for Quadrupedal Locomotion},
  booktitle = {IEEE International Conference on Robotics and Automation (ICRA)},
  year      = {2017},
}

@article{gehring2016,
  author    = {Gehring, Christian and Coros, Stelian and Hutter, Marco and Bellicoso, Dario C. and Heijnen, Huub and Diethelm, Remo and Bloesch, Michael and F{\"a}nkel, Philipp and Hwangbo, Jemin and Hoepflinger, Mark and Siegwart, Roland},
  title     = {Practice Makes Perfect: An Optimization-Based Approach to Controlling Agile Motions for a Quadruped Robot},
  journal   = {IEEE Robotics \& Automation Magazine},
  volume    = {23},
  number    = {1},
  pages     = {34--43},
  year      = {2016},
}

@inproceedings{wang2017realtime,
  author    = {Wang, Shihao and Hauser, Kris},
  title     = {Real-time Stabilization of a Falling Humanoid Robot using Hand Contact: {An} Optimal Control Approach},
  booktitle = {IEEE-RAS International Conference on Humanoid Robots (Humanoids)},
  year      = {2017},
  pages     = {454--460},
}

@inproceedings{wang2018realization,
  author    = {Wang, Shihao and Hauser, Kris},
  title     = {Realization of a Real-time Optimal Control Strategy to Stabilize a Falling Humanoid Robot with Hand Contact},
  booktitle = {IEEE International Conference on Robotics and Automation (ICRA)},
  year      = {2018},
  pages     = {1--7},
}

@inproceedings{wang2018unified,
  author    = {Wang, Shihao and Hauser, Kris},
  title     = {Unified Multi-Contact Fall Mitigation Planning for Humanoids via Contact Transition Tree Optimization},
  booktitle = {IEEE-RAS International Conference on Humanoid Robots (Humanoids)},
  year      = {2018},
  pages     = {1--9},
}

@inproceedings{sharma2016bicycle,
  author    = {Sharma, Aakash M. and Wang, Shihao and Zhou, Yifu M. and Ruina, Andy},
  title     = {Towards a Maximally-Robust Self-Balancing Bicycle without Reaction-Moment Gyroscopes or Reaction Wheels},
  booktitle = {Bicycle and Motorcycle Dynamics (BMD)},
  year      = {2016},
}

@inproceedings{wang2021calibration,
  author    = {Wang, Shihao and Deng, Chao and Qi, Qi},
  title     = {Efficient Online Calibration for Autonomous Vehicle's Longitudinal Dynamical System: {A} {Gaussian} Model Approach},
  booktitle = {IEEE International Conference on Robotics and Automation (ICRA)},
  pages     = {5410--5416},
  year      = {2021},
}

@inproceedings{chen2021,
  author    = {Chen, Thomas and Posa, Michael},
  title     = {Optimal Reduced-Order Modeling of Bipedal Locomotion},
  booktitle = {IEEE International Conference on Robotics and Automation (ICRA)},
  year      = {2021},
}

@article{pratt2006,
  author    = {Pratt, Jerry and Carff, John and Drakunov, Sergei and Goswami, Ambarish},
  title     = {Capture Point: {A} Step toward Humanoid Push Recovery},
  booktitle = {IEEE-RAS International Conference on Humanoid Robots},
  year      = {2006},
  pages     = {200--207},
}

@article{englsberger2015,
  author    = {Englsberger, Johannes and Ott, Christian and Albu-Sch{\"a}ffer, Alin},
  title     = {Three-Dimensional Bipedal Walking Control Based on Divergent Component of Motion},
  journal   = {IEEE Transactions on Robotics},
  volume    = {31},
  number    = {2},
  pages     = {355--368},
  year      = {2015},
}

\end{document}